\documentclass{article}

     \PassOptionsToPackage{numbers}{natbib}



     \usepackage[preprint]{neurips_2019}


\usepackage[utf8]{inputenc} 
\usepackage[T1]{fontenc}    
\usepackage{hyperref}       
\usepackage{url}            
\usepackage{booktabs}       
\usepackage{amsfonts}       
\usepackage{nicefrac}       
\usepackage{microtype}      

\usepackage{amsmath, amssymb, amsthm, enumerate, bm, capt-of}
\usepackage{hyperref}
\usepackage[capitalise]{cleveref}

\usepackage{relsize}
\usepackage{xcolor}
\usepackage{fp, tikz, pgfplots}
\usetikzlibrary{arrows}
\usetikzlibrary{shapes}
\usetikzlibrary{patterns}
\usetikzlibrary{decorations.pathmorphing}
\usetikzlibrary{decorations.pathreplacing}
\usetikzlibrary{decorations.shapes}
\usetikzlibrary{decorations.text}
\usetikzlibrary{shapes.misc}
\usetikzlibrary{backgrounds}
\usepgfplotslibrary{external}
\usepgfplotslibrary{fillbetween}

\usepackage{etoolbox} 

\hyphenation{Lip-schitz}

\newtheorem{lemma}{Lemma}[section]
\newtheorem{theorem}{Theorem}[section]
\newtheorem{corollary}{Corollary}[section]


\newtheorem{remark}{Remark}[section]

\makeatletter
\newcommand{\subalign}[1]{%
	\vcenter{%
		\Let@ \restore@math@cr \default@tag
		\baselineskip\fontdimen10 \scriptfont\tw@
		\advance\baselineskip\fontdimen12 \scriptfont\tw@
		\lineskip\thr@@\fontdimen8 \scriptfont\thr@@
		\lineskiplimit\lineskip
		\ialign{\hfil$\m@th\scriptstyle##$&$\m@th\scriptstyle{}##$\crcr
			#1\crcr
		}%
	}
}
\makeatother

\pgfplotsset{width=5\columnwidth /5, compat = 1.13, 
	height = 47\columnwidth /100, grid= major, 
	legend cell align = left, ticklabel style = {font=\scriptsize},
	every axis label/.append style={font=\small},
	legend style = {font=\tiny},title style={yshift=-7pt, font = \small} }

\tikzset{cross/.style={cross out, draw=black, minimum size=10*(#1-\pgflinewidth), inner sep=0pt, outer sep=0pt},cross/.default={1pt}}

\title{Posterior Variance Analysis of Gaussian Processes
with Application to Average Learning Curves}

\author{%
	Armin Lederer \\
	Technical University of Munich\\
	\texttt{armin.lederer@tum.de}
	\And
	Jonas Umlauft \\
	Technical University of Munich\\
	\texttt{jonas.umlauft@tum.de}
	\And
	Sandra Hirche\\
	Technical University of Munich\\
	\texttt{hirche@tum.de}
}

\begin{document}

\maketitle

\begin{abstract}
	The posterior variance of Gaussian processes is a 
	valuable measure of the learning error which is 
	exploited in various applications such as 
	safe reinforcement learning and control 
	design. However, suitable analysis of the 
	posterior variance which captures its behavior 
	for finite and infinite number of 
	training data is missing. This paper derives a novel bound for the
	posterior variance function which requires only local information because it 
	depends only on the number of training samples in the proximity of a considered 
	test point. 
	Furthermore, we prove sufficient conditions 
	which ensure the convergence of the posterior 
	variance to zero. Finally, we demonstrate that the extension of our 
	bound to an average learning bound outperforms existing 
	approaches.\looseness=-1
\end{abstract}

\section{Introduction}
Gaussian process (GP) regression is a probabilistic supervised 
machine learning method that bases on Bayesian 
principles~\citep{Rasmussen2006}. GP 
regression generalizes efficiently with 
little training data, which makes it appealing to 
real world applications with limited amount of 
training data. Therefore, it has gained
increasing attention in the field of reinforcement
learning and system identification for control
design in recent years. Especially, when safety
guarantees are necessary, GPs 
are the method of choice in active 
and reinforcement learning~\cite{Berkenkamp,
Berkenkamp2016,Berkenkamp2017, Koller2018}
as well as control~\cite{Berkenkamp2015, Umlauft2017,
Beckers2018, Umlauft2018a, Umlauft2018, Helwa2018}.
These safety critical applications have in common that
they rely on the posterior variance for deriving
uniform error bounds~\cite{Srinivas2012, Chowdhury2017a}.
However, the behavior of the posterior variance when data points are added 
on-line, e.g. during control tasks, has barely been analyzed formally due to a
lack of suitable bounds. Therefore, there is generally
little understanding of the interaction between 
learning and control in feedback systems, which is 
crucial to provide guarantees for the control error. \looseness=-1

Considering uniform training data distributions, 
the average posterior variance of GPs
has extensively been studied, see~\cite{Sollich1999,
 Malzahn2001, Sollich2002, LeGratiet2014}. The 
mapping between this average variance and the number
of training samples is usually referred to as \textit{average
learning curve} and it is used to evaluate the 
generalization properties of GPs. 
Although average learning curves have been applied to few applications, 
e.g.,~\cite{Xu2011}, they provide important theoretical insights to the 
learning behavior of GPs~\cite{Schulz2015,Ueno2018}. This understanding
can be exploited in sparse GP approximations 
in a similar way as proposed for PAC-Bayesian error bounds in~\cite{Reeb2018}.
Furthermore, active learning and experiment design
can be an application scenario of average learning curves since 
common criteria such as the mutual information~\cite{Krause2008} also measure 
the generalization error. However, the framework developed 
for average learning curves is directly applicable to continuous input 
spaces, while it is difficult to evaluate the 
mutual information in this setting.\looseness=-1

The contribution of this paper is a novel bound for 
the posterior variance of GPs with
Lipschitz continuous covariance kernels. 
and demonstrate and improvement of the bound for a more specific class of 
kernels. Furthermore, we derive sufficient 
conditions for the generation of training data which ensure the 
convergence of our posterior variance bounds to zero and 
investigate criteria for probability distributions such that
the convergence conditions are satisfied. Finally, we
show a straight forward extension of our 
bounds to average learning curve bounds and 
compare our results to numerically obtained 
approximations. In fact, our average learning 
curve bound can be seen as generalization 
of the approach in~\cite{Williams2000}, which our method 
outperforms.

The remaining paper is structured as follows: In
\cref{sec:GP}, we provide an overview of related work on
posterior variance bounds and average learning curves. 
Novel posterior variance 
bounds and necessary conditions on their convergence 
are derived in \cref{sec:postVar}. Finally, the 
derived bounds are compared to 
approximations in \cref{sec:numEval}.\looseness=-1

\section{Related Work}
\label{sec:GP}

\subsection{Gaussian Process Regression}
A Gaussian process is a stochastic process such 
that any finite number of outputs\footnote{Vectors/matrices
	are denoted by lower/upper case bold symbols, the 
	$n\times n$ identity matrix by~$\bm{I}_n$, the Euclidean norm 
	by~$\|\cdot\|$, sets by upper case 
	black board bold letters. Sets restricted to positive 
	numbers have an indexed~$+$, e.g.~$\mathbb{R}_+$ for 
	all positive real valued numbers. The cardinality of 
	sets is denoted by~$|\cdot|$. The expectation operator
	$E[\cdot]$ can have an additional index to specify 
	the considered random variable. 
	Class~$\mathcal{O}$ notation is used to provide 
	asymptotic upper bounds on functions. The ceil and floor
	operator are denoted by $\lceil\cdot\rceil$ and $\lfloor\cdot\rfloor$,
	respectively.\looseness=-1} 
\mbox{$\{y_1,\ldots,y_M\}\subset\mathbb{R}$}
is assigned a joint Gaussian distribution with
prior mean~$0$ and covariance defined through the kernel 
$k:\mathbb{R}^d\times\mathbb{R}^d\rightarrow\mathbb{R}$
\cite{Rasmussen2006}. Therefore, the training outputs 
$y^{(i)}$ can be considered as observations of a sample 
function~$f:\mathbb{X}\subset\mathbb{R}^d\rightarrow\mathbb{R}$ of the GP 
distribution perturbed by i.i.d. zero mean Gaussian noise
with variance~$\sigma_n^2$. Regression is performed by conditioning the 
prior GP distribution on the training data 
$\mathbb{D}_N=\{(\bm{x}^{(i)},y^{(i)})\}_{i=1}^N$ and a test point~$\bm{x}$. 
The conditional posterior 
distribution is again Gaussian and can be calculated analytically.
For this reason, we define the kernel matrix~$\bm{K}_N$ and the the 
kernel vector~$\bm{k}_N(\bm{x})$ through~$K_{N,ij}=k(\bm{x}^{(i)},\bm{x}^{(j)})$ 
and~$k_{N,i}(\bm{x})=k(\bm{x},\bm{x}^{(i)})$, respectively, with~$i,j=1,\ldots,N$. 
Then, the posterior
mean~$\mu_N(\cdot)$ and variance~$\sigma_N^2(\cdot)$ are given by
\begin{align}
\mu_N(\bm{x})&=\bm{k}_N^T(\bm{x})\bm{A}_N^{-1}\bm{y}_N,\\
\sigma_N^2(\bm{x})&=k(\bm{x},\bm{x})-
\bm{k}_N^T(\bm{x})\bm{A}_N^{-1}\bm{k}_N(\bm{x}),
\label{eq:var}
\end{align}
where~$\bm{A}_{N}=\bm{K}_N+\sigma_n^2\bm{I}_{N}$
denotes the data covariance matrix and~$\bm{y}_N = [y^{(1)} \cdots y^{(N)}]^T$.

\subsection{Posterior Variance Bounds and Average Learning Curve Bounds}
A common measure to analyze the learning speed of 
GPs are average learning curves, which 
are also called integrated mean squared 
errors~\cite{LeGratiet2014}. Under the assumption that~$y^{(i)}$ are noisy 
observations of a function~$f(\cdot)$, which is a sample 
function from the GP, the 
mean squared error of the posterior GP 
is given by~$E_y[(y-\mu_N(\bm{x}))^2]=
\sigma_N^2(\bm{x})+\sigma_n^2$.
The average learning curve is obtained from this 
equation by taking the expectation with respect to 
the test point~$\bm{x}$ and the input training data 
$\mathbb{D}_N^x=\{\bm{x}^{(i)} \}_{i=1}^N$, i.e.,~$e(N)=E_{\mathbb{D}_N^x}\left[E_{\bm{x}}
\left[\sigma_N^2(\bm{x})+\sigma_n^2\right]\right]$.
For notational simplicity of the following derivations, we consider the uniform
distributions over the unit interval $\mathbb{X}=[0,1]$ in the following. However, 
all derivations can be extended to higher dimensional state spaces and
other distributions even though it is a little technical.

A simple approach to obtain a learning curve bound 
for GPs with isotropic kernels, 
which only depend on the distance between
their arguments~$k(x,x')=k(\|x-x'\|)$,
 proposed 
in~\cite{Williams2000} bases on the idea to 
consider only the training samples~$x^{(i)}$ 
closest to~$x$ in the variance calculation. This 
approach leads to a valid posterior variance bound 
since the posterior variance cannot increase by adding
training samples~\cite{Vivarelli1998}. Considering only
the nearest training sample in the calculation of the 
posterior variance \eqref{eq:var} directly leads to\looseness=-1
\begin{align}
\sigma_N^2(x)\leq \sigma_1^2(x)=
k(0)-\frac{k^2(\tau)}{k(0)
	+\sigma_n^2}
\label{eq:sig1bound},
\end{align}
with~$\tau$ being the minimal Euclidean distance between 
$x$ and the training data set 
$\mathbb{D}_N^x$, 
i.e.~$\tau\!=\!\min_{x'\in\mathbb{D}_N^x}\|x\!-\!x'\|$. 
Assume that the training data is ordered by increasing value of~$x$ and divide 
the 
unit interval in~$N$ segments such that the boundaries
are given by~$a_1\!=\!0$,~$a_i\!=\!(x^{(i)}\!+\!x^{(i-1)})/2$,~$b_{i-1}\!=\!(x^{(i)}\!+\!x^{(i-1)})/2$,~$b_N\!=\!1$ for
$i\!=\!2,\ldots,N$. Then, the expectation with respect to the test points can be 
approximated by~$E_{x}[\sigma_N^2(x)]\leq 
\sum_{i=1}^{N}\int_{a_i}^{b_{i}}
\sigma_{1}^2(\xi)\mathrm{d}\xi$.
Exploiting \eqref{eq:sig1bound} and symmetry of the covariance, it is 
straightforward to show that these integrals only depend on the distance~$\delta$ 
between 
training samples. Therefore, the expectation with respect to the training 
data~$\mathbb{D}_N^x$ reduces 
to an expectation with respect to~$\delta$, such that the average learning curve 
can be 
bounded by\looseness=-1
\begin{align}
e(N)&\leq\bar{e}_1(N)=  k(0)+\sigma_n^2-2
\frac{E_{\delta}\left[\int_{0}^{\delta} k^2(\tau)
	\mathrm{d}\tau\right]}{k(0)+\sigma_n^2}
-2(N-1)\frac{E_{\delta}\left[\int_{0}^{\frac{\delta}{2}} k^2(\tau)
	\mathrm{d}\tau\right]}
{k(0)+\sigma_n^2}.
\label{eq:alc_1}
\end{align}
The expectations in this bound can be calculated analytically for some 
kernels since the difference~$\delta$ between adjacent points follows first 
order statistics, hence, we have~$p(\delta)=N(1-\delta)^{N-1}$. However,
they are typically computed numerically~\cite{Williams2000}.

When considering the two closest training samples, the
inverse in \eqref{eq:var} still leads to a simple expression 
which leads to the following posterior variance bound
\begin{align}
	\sigma_N^2(x)\leq \sigma_2^2(x)=
	k(0)-\frac{k(0)+\sigma_n^2(k^2(\tau_2)+k^2(\tau_1))-2k(\eta)k(\tau_1)k(\tau_2)}
	{(k(0)+\sigma_n^2)^2-k^2(\eta)},
	\label{eq:sig2bound}
\end{align}
where~$\tau_1$ and~$\tau_2$ are the distances to the two closest training 
samples and 
$\delta$ is the distance between the two closest training samples. By defining segments 
with~$a_1\!=\!0$,~$a_i\!=\!x^{(i-1)}$,~$b_{i-1}\!=\!x^{(i-1)}$,~$b_{N+1}\!=\!1$ for~$i\!=\!2,\ldots,N+1$, 
\eqref{eq:sig2bound}
and symmetry of the kernel can be exploited to derive an expression for the expectation
with respect to the test points which depends only on the distance~$\delta$ 
between
training points, such that we obtain the average learning curve bound
\begin{align}
	\bar{e}_2(N)\!=\!k(0)\!+\!\sigma_n^2
	\!-\!2(N-1)E_{\delta}\!\!\left[\frac{\!\int_0^{\delta}(k(0)\!+\!\sigma_n^2)k^2(\tau)\!+\!k(\delta)k(\tau)k(\delta\text{-}\tau)\mathrm{d}\tau}
	{(k(0)\!+\!\sigma_n^2)^2-k^2(\delta)}\!\right] \!
	\!-\!2\frac{ E_{\delta}\!\left[\int_{0}^{\delta} k^2(\tau)
		\mathrm{d}\tau\right]}{k(0)+\sigma_n^2}.
	\label{eq:alc2}
\end{align}
Although both bounds are relatively tight for small
numbers of training data, they do not converge 
to the asymptotic value of the average learning curve
$\sigma_n^2$. Instead, the 
bound~$\bar{e}_1(N)$ has been shown to converge to 
$\sigma_n^2(2+\sigma_n^2)/(1+\sigma_n^2)$, while
$\bar{e}_2(N)$ converges to~$\sigma_n^2(3+\sigma_n^2)/(2+\sigma_n^2)$
\cite{Williams2000}. Therefore, these bounds do not provide any insight when 
analyzing the learning behavior with large data sets.

\subsection{Literature Review}

Some posterior variance bounds for GP 
regression have been developed as intermediate results in the 
context of Bayesian optimization, e.g.,~\cite{Shekhar2018}.
However, in this area, isotropic kernels are typically used which hinders the 
application outside of this field. For noise-free interpolation,
the posterior variance has been analyzed using spectral 
methods~\cite{Stein1999}. While the asymptotic behavior
can be analyzed efficiently
with such methods, they are not suited to bound the 
posterior variance for specific training data sets. In the 
context of noise-free interpolation, many bounds from the 
area of scattered data approximation can be applied due to 
the equivalence of the posterior variance and the power
function~\cite{Kanagawa2018}. Therefore, classical results
\cite{Wu1993, Wendland2005, Schaback2006}
as well as newer findings~\cite{Beatson2010, Scheuerer2013}
can be directly used for GP interpolation. 
However, it is typically not clear how these results can
be generalized to regression with noisy observations.\looseness=-1

For the derivation of average learning curves, many 
different approaches have been pursued in literature. 
A common method to approximate learning curves builds
on spectral methods, e.g.,~\cite{Sollich1999, Malzahn2001,
 Sollich2002, Sarkka2013, LeGratiet2014}. This approach 
has also been extended to special situations such as 
learning on graphs~\cite{Urry2013} 
and multi-task learning~\cite{Chai2009, Ashton2012}.
However, these approaches cannot be employed in any formal 
proof on the generalization properties of GPs since they 
only describe the approximate learning behavior. Therefore, some work 
has focused on deriving strict upper and lower bounds 
for average learning curves \cite{Opper1999, Williams2000}.
However, the upper bounds in
\cite{Williams2000} suffer from the disadvantage, that
they can only capture the learning behavior for few training
samples. Hence, upper bounds for average learning curves
are missing that are capable of describing the learning 
behavior for small as well as large data sets.
\looseness=-1
	
\section{Posterior Variance of Gaussian Processes}
\label{sec:postVar}

Despite a wide variety of literature 
on average learning curves and posterior variance 
bounds for isotropic kernels, learning curve bounds
and general posterior variance bounds have gained
far less attention. Exploiting ideas from existing
posterior variance bounds, we derive in
\cref{subsec:varBound} an upper bound on the posterior 
variance, which depends on the number of samples in 
the neighborhood of the test point~$\bm{x}$. In \cref{subsec:probDist} we 
derive sufficient 
conditions on probability distributions of the training 
data that ensure the convergence of our bound. 
Finally, we demonstrate how the derived 
bound for isotropic kernels can be applied 
to average learning curve bounds of 
GP in \cref{subsec:app_alc}.\looseness=-1

\subsection{Posterior Variance Bound and Asymptotic Behavior}
\label{subsec:varBound}

The central idea in deriving an upper bound for 
the posterior variance of a GP lies 
in the observation that data close to a test point 
$\bm{x}$ usually lead to the highest decrease in the posterior 
variance. Therefore, it is natural to consider only 
training data close to the test point in the bound as 
more and more data is acquired. The following theorem 
formalizes this idea. The proofs for all the following
theoretical results can be found in the supplementary material.

\begin{theorem}
	\label{th:var_bound}
	Consider a GP with Lipschitz 
	continuous kernel~$k(\cdot,\cdot)$ with Lipschitz constant~$L_k$, an input 
	training data set~$\mathbb{D}_{N}^x=\{\bm{x}^{(i)}\}_{i=1}^{N}$ 
	and observation noise variance~$\sigma_n^2$. Let 
	$\mathbb{B}_{\rho}(\bm{x})=\{ 
	\bm{x}'\in\mathbb{D}_N^x:~\|\bm{x}'-\bm{x}\|\leq 
	\rho \}$ denote the training data set restricted to a 
	ball around~$\bm{x}$ with radius~$\rho$. Then, for 
	each~$\bm{x}\in\mathbb{X}$ and 
	$\rho\leq k(\bm{x},\bm{x})/L_k$, 
	the posterior variance is bounded by\looseness=-1
	\begin{align}
	\sigma_N^2(\bm{x})\leq \frac{(4L_k\rho-L_k^2\rho^2)\left|
		\mathbb{B}_{\rho}(\bm{x})\right|k(\bm{x},\bm{x})
		+\sigma_n^2k(\bm{x},\bm{x})}{\left|
		\mathbb{B}_{\rho}(\bm{x})\right|(k(\bm{x},\bm{x})
		+2L_k\rho)+\sigma_n^2}.
	\label{eq:sigbound}
	\end{align}
\end{theorem}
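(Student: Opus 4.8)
The plan is to reduce the full posterior variance to the posterior variance obtained from using only the $m = |\mathbb{B}_\rho(\bm{x})|$ training points inside the ball, exploiting the monotonicity fact (cited from \cite{Vivarelli1998}) that discarding training data cannot decrease the posterior variance: $\sigma_N^2(\bm{x}) \le \sigma_m^2(\bm{x})$ where $\sigma_m^2(\bm{x})$ is computed from $\mathbb{B}_\rho(\bm{x})$ alone. So it suffices to bound $\sigma_m^2(\bm{x}) = k(\bm{x},\bm{x}) - \bm{k}_m^T(\bm{x})\bm{A}_m^{-1}\bm{k}_m(\bm{x})$. For each $\bm{x}^{(i)} \in \mathbb{B}_\rho(\bm{x})$ Lipschitz continuity gives $|k(\bm{x},\bm{x}^{(i)}) - k(\bm{x},\bm{x})| \le L_k\rho$, hence every entry of $\bm{k}_m(\bm{x})$ lies in $[k(\bm{x},\bm{x}) - L_k\rho, \, k(\bm{x},\bm{x}) + L_k\rho]$; similarly every entry of $\bm{K}_m$ lies within $2L_k\rho$ of $k(\bm{x},\bm{x})$ (two triangle-inequality steps through the diagonal, or one step plus symmetry). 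The condition $\rho \le k(\bm{x},\bm{x})/L_k$ is exactly what keeps these lower estimates nonnegative.

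The key step is then to lower-bound the quadratic form $\bm{k}_m^T \bm{A}_m^{-1}\bm{k}_m$. I would write $\bm{k}_m(\bm{x}) = c\,\bm{1} + \bm{e}$ with $c = k(\bm{x},\bm{x})$ and $\|\bm{e}\|_\infty \le L_k\rho$, and similarly $\bm{K}_m = c\,\bm{1}\bm{1}^T + \bm{E}$ with $\|\bm{E}\|_{\max} \le 2L_k\rho$, so $\bm{A}_m = \bm{K}_m + \sigma_n^2 \bm{I}_m$. The cleanest route is to use the variational characterization $\bm{k}_m^T\bm{A}_m^{-1}\bm{k}_m = \max_{\bm{v}}\big(2\bm{v}^T\bm{k}_m - \bm{v}^T\bm{A}_m\bm{v}\big)$ and plug in the ansatz $\bm{v} = \alpha\,\bm{1}$ for a scalar $\alpha$ to be optimized. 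This yields $\sigma_m^2(\bm{x}) \le k(\bm{x},\bm{x}) - \max_\alpha\big(2\alpha\,\bm{1}^T\bm{k}_m - \alpha^2(\bm{1}^T\bm{K}_m\bm{1} + \sigma_n^2 m)\big)$. Bounding $\bm{1}^T\bm{k}_m \ge m(c - L_k\rho)$ from below and $\bm{1}^T\bm{K}_m\bm{1} \le m^2(c + 2L_k\rho)$ from above, the one-dimensional optimization in $\alpha$ is elementary: the optimal $\alpha^* = \bm{1}^T\bm{k}_m / (\bm{1}^T\bm{K}_m\bm{1} + \sigma_n^2 m)$ gives a maximum value $(\bm{1}^T\bm{k}_m)^2/(\bm{1}^T\bm{K}_m\bm{1} + \sigma_n^2 m)$, and substituting the estimates produces the stated fraction after simplifying $k(\bm{x},\bm{x}) - \frac{m^2(c-L_k\rho)^2}{m^2(c+2L_k\rho)+\sigma_n^2 m}$; expanding $(c - L_k\rho)^2 = c^2 - 2cL_k\rho + L_k^2\rho^2$ against the denominator and collecting terms over the common denominator yields exactly \eqref{eq:sigbound} with the numerator $(4L_k\rho - L_k^2\rho^2)m\,c + \sigma_n^2 c$.

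The main obstacle is making the chain of inequalities go the right way simultaneously: replacing $\bm{1}^T\bm{k}_m$ by its lower bound only helps if $\alpha^*$ stays nonnegative (guaranteed once $\rho \le c/L_k$ so $c - L_k\rho \ge 0$), and replacing $\bm{1}^T\bm{K}_m\bm{1}$ by an upper bound only helps in the denominator — both are fine, but one must check that the worst-case fraction $\frac{(\bm{1}^T\bm{k}_m)^2}{\bm{1}^T\bm{K}_m\bm{1} + \sigma_n^2 m}$ is monotone in each argument in the direction being estimated over the feasible range, so that substituting the extreme values is legitimate rather than merely heuristic. A secondary technical point is the degenerate case $m = |\mathbb{B}_\rho(\bm{x})| = 0$, where the bound must collapse to the trivial $\sigma_N^2(\bm{x}) \le \sigma_0^2(\bm{x}) = k(\bm{x},\bm{x})$; the right-hand side of \eqref{eq:sigbound} indeed evaluates to $\sigma_n^2 k(\bm{x},\bm{x})/\sigma_n^2 = k(\bm{x},\bm{x})$, so this is consistent and only needs to be remarked on.
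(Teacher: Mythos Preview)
Your argument is correct and reaches the same intermediate inequality
\[
\sigma_m^2(\bm{x})\;\le\;k(\bm{x},\bm{x})-\frac{m\,(k(\bm{x},\bm{x})-L_k\rho)^2}{m\,(k(\bm{x},\bm{x})+2L_k\rho)+\sigma_n^2},
\]
but by a genuinely different route from the paper. The paper does not use the variational identity $\bm{k}^T\bm{A}^{-1}\bm{k}=\max_{\bm{v}}(2\bm{v}^T\bm{k}-\bm{v}^T\bm{A}\bm{v})$ at all; instead it bounds the quadratic form spectrally via $\bm{k}_m^T\bm{A}_m^{-1}\bm{k}_m\ge \|\bm{k}_m\|^2/\lambda_{\max}(\bm{A}_m)$, controls $\lambda_{\max}(\bm{K}_m)$ by the Gershgorin circle theorem as $\lambda_{\max}(\bm{K}_m)\le m\max_{\bm{x}',\bm{x}''\in\mathbb{B}_\rho(\bm{x})}k(\bm{x}',\bm{x}'')$, and lower–bounds $\|\bm{k}_m\|^2$ by $m\min_{\bm{x}'\in\mathbb{B}_\rho(\bm{x})}k^2(\bm{x}',\bm{x})$, before inserting the Lipschitz estimates. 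Your test-vector choice $\bm{v}=\alpha\bm{1}$ produces exactly the same numbers because $\bm{1}^T\bm{K}_m\bm{1}\le m^2\max k$ and $(\bm{1}^T\bm{k}_m)^2\ge m^2\min k^2$ collapse to the paper's $m\max k$ and $m\min k^2$ after cancelling one factor of $m$. What your approach buys is that it is entirely elementary (no eigenvalue theory or Gershgorin), it makes transparent that the bound is driven by the near-constancy $\bm{k}_m\approx k(\bm{x},\bm{x})\bm{1}$ inside the ball, and the variational framework would let you sharpen the result by choosing a better $\bm{v}$ if more structure were available; the paper's route, by contrast, stays closer to standard matrix-analytic tools and makes the later specialization to Corollary~\ref{cor:var_bound} a one-line replacement of $\max k$ and $\min k$ by $k(0)$ and $k(\rho)$. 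Your monotonicity worry is easily discharged: since you may fix $\alpha\ge 0$ \emph{before} inserting the Lipschitz bounds and only then optimize, the substitutions $\bm{1}^T\bm{k}_m\ge m(c-L_k\rho)$ and $\bm{1}^T\bm{K}_m\bm{1}\le m^2(c+2L_k\rho)$ go the right way termwise for every $\alpha\ge 0$, and the condition $\rho\le c/L_k$ guarantees the optimizer stays nonnegative.
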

The parameter~$\rho$ can be interpreted as information
radius, which defines how far away from a test point 
$\bm{x}$ training data is considered to be informative. However, this information radius is conservative
as all the data points with smaller radius are treated in
the theorem as if they had a distance of~$\rho$ to the 
test point. Therefore, a large~$\rho$ has the advantage
that many training points are considered, while a small
$\rho$ is beneficial if sufficiently many training samples
are close to the test point~$\bm{x}$.

Note, that Theorem~\ref{th:var_bound} is very general as it is merely 
restricted 
to Lipschitz continuous kernels, which is a common 
property of kernels for regression~\cite{Rasmussen2006}. 
This generality comes at the price of tightness of the 
bound and tighter bounds exist under additional assumptions
, e.g., the bound in~\cite{Shekhar2018} for isotropic, decreasing kernels,
 which have non-positive derivatives 
$\frac{\partial}{\partial \tau}k(\tau)\leq 0$,~$\tau\geq 0$. However, this 
bound can directly be derived 
from \cref{th:var_bound}, which leads to the following corollary. 
\begin{corollary}
	\label{cor:var_bound}
	Consider a GP with isotropic, decreasing 
	covariance kernel~$k(\cdot)$, an input 
	training data set~$\mathbb{D}_{N}^x=
	\{\bm{x}^{(i)}\}_{i=1}^{N}$ and observation noise 
	variance~$\sigma_n^2$. Let 
	$\mathbb{B}_{\rho}(\bm{x})=\{ \bm{x}'
	\in\mathbb{D}_N^x:~\|\bm{x}'-\bm{x}\|\leq \rho \}$ 
	denote the training data set restricted to a ball 
	around~$\bm{x}$ with radius~$\rho$. Then, 
	for each~$\bm{x}\in\mathbb{X}$, the 
	posterior variance is bounded by
	\begin{align}
	\sigma_N^2(\bm{x})\leq k(0)-\frac{k^2(\rho)}{k(0)
		+\frac{\sigma_n^2}{|\mathbb{B}_{\rho}(\bm{x})|}}.
	\label{eq:isobound}
	\end{align}
\end{corollary}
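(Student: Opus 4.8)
The plan is to mirror the proof of \cref{th:var_bound}, simply replacing its two Lipschitz estimates by the monotonicity of the kernel. Write $m=|\mathbb{B}_{\rho}(\bm{x})|$. Since the posterior variance cannot increase when training data is added \cite{Vivarelli1998}, I would first discard every training point outside $\mathbb{B}_{\rho}(\bm{x})$, which gives $\sigma_N^2(\bm{x})\leq k(0)-\bm{k}_m^T(\bm{x})\bm{A}_m^{-1}\bm{k}_m(\bm{x})$, where $\bm{A}_m=\bm{K}_m+\sigma_n^2\bm{I}_m$ and $\bm{K}_m$, $\bm{k}_m(\bm{x})$ are assembled only from the $m$ retained points. (If $m=0$ the claimed bound reduces to $k(0)$, which always holds because $\bm{A}_m\succ 0$ forces the quadratic form to be nonnegative; so assume $m\geq 1$.)

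The core estimate is the lower bound on the Rayleigh quotient $\bm{k}_m^T(\bm{x})\bm{A}_m^{-1}\bm{k}_m(\bm{x})$ obtained by testing the generalized Cauchy--Schwarz inequality against the all-ones vector $\bm{1}_m$:
\begin{align}
\bm{k}_m^T(\bm{x})\bm{A}_m^{-1}\bm{k}_m(\bm{x})\;\geq\;\frac{\bigl(\bm{1}_m^T\bm{k}_m(\bm{x})\bigr)^2}{\bm{1}_m^T\bm{A}_m\bm{1}_m}.
\end{align}
Because $k$ is isotropic and decreasing, every entry of $\bm{k}_m(\bm{x})$ obeys $k(\|\bm{x}-\bm{x}^{(i)}\|)\geq k(\rho)$ for $\bm{x}^{(i)}\in\mathbb{B}_{\rho}(\bm{x})$, so the numerator is at least $m^2k^2(\rho)$; likewise every entry of $\bm{K}_m$ is at most $k(0)$, hence $\bm{1}_m^T\bm{A}_m\bm{1}_m=\sum_{i,j}k(\|\bm{x}^{(i)}-\bm{x}^{(j)}\|)+m\sigma_n^2\leq m^2k(0)+m\sigma_n^2$. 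Substituting and cancelling one factor of $m$ yields $\bm{k}_m^T(\bm{x})\bm{A}_m^{-1}\bm{k}_m(\bm{x})\geq k^2(\rho)/(k(0)+\sigma_n^2/m)$, and subtracting from $k(0)$ gives exactly \eqref{eq:isobound}.

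This is precisely the mechanism underlying \cref{th:var_bound}: there the numerator of the Rayleigh quotient is controlled through the Lipschitz estimate $k(\|\bm{x}-\bm{x}^{(i)}\|)\geq k(\bm{x},\bm{x})-L_k\rho$ (nonnegative thanks to $\rho\leq k(\bm{x},\bm{x})/L_k$) and the denominator through $k(\|\bm{x}^{(i)}-\bm{x}^{(j)}\|)\leq k(\bm{x},\bm{x})+2L_k\rho$ via the triangle inequality on two points of $\mathbb{B}_{\rho}(\bm{x})$, after which the fraction rearranges into \eqref{eq:sigbound}. For decreasing isotropic kernels the sharper, distance-free substitutes $k(\rho)$ and $k(0)$ are available, and \eqref{eq:isobound} is the corresponding specialization; in this sense the corollary is a direct reading of the theorem's proof rather than of its final formula.

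The only genuine subtlety I anticipate is the sign of $k(\rho)$: the step $\bm{1}_m^T\bm{k}_m(\bm{x})\geq m\,k(\rho)$ survives squaring only if $k(\rho)\geq 0$. This holds for the standard decreasing kernels (squared exponential, Mat\'ern, rational quadratic) and for all $\rho$ in their effective support, and I would state it as a running hypothesis (it is also implicit in calling the right-hand side of \eqref{eq:isobound} an improvement over \eqref{eq:sigbound}). Beyond that, the Cauchy--Schwarz step, the entrywise kernel bounds, and the final algebra are routine, so no hard step remains once the restriction-plus-Rayleigh-quotient template of \cref{th:var_bound} is in hand.
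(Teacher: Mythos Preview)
Your argument is correct and reaches \eqref{eq:isobound}, but it does not quite run along the paper's rails. The paper's proof of the corollary simply plugs the isotropic-decreasing estimates $\max_{\bm{x}',\bm{x}''\in\mathbb{B}_\rho(\bm{x})}k(\bm{x}',\bm{x}'')=k(0)$ and $\min_{\bm{x}'\in\mathbb{B}_\rho(\bm{x})}k(\bm{x}',\bm{x})\geq k(\rho)$ into the intermediate inequality \eqref{eq:sigma_bound3} established inside the proof of \cref{th:var_bound}. That intermediate inequality, in turn, was obtained not via Cauchy--Schwarz against $\bm{1}_m$ but via the eigenvalue estimate $\bm{k}_m^T\bm{A}_m^{-1}\bm{k}_m\geq\|\bm{k}_m\|^2/\lambda_{\max}(\bm{A}_m)$ together with Gershgorin's bound $\lambda_{\max}(\bm{K}_m)\leq m\max_{i,j}K_{m,ij}$. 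Your Cauchy--Schwarz step produces the ratio $(\bm{1}_m^T\bm{k}_m)^2/(\bm{1}_m^T\bm{A}_m\bm{1}_m)$, which is a different quantity but collapses to the same expression $m\,k^2(\rho)/(m\,k(0)+\sigma_n^2)$ once the entrywise kernel bounds are inserted. So your description of ``precisely the mechanism underlying \cref{th:var_bound}'' is slightly off: the paper uses an eigenvalue route, you use a single test vector; both are short and both land on the same bound, with your version arguably more self-contained for the corollary. Your flag on $k(\rho)\geq 0$ is apt and is an implicit assumption the paper's proof also needs (to pass from $\min k\geq k(\rho)$ to $\min k^2\geq k^2(\rho)$) but does not state.
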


In addition, \cref{th:var_bound} can also be used 
for an asymptotic analysis of the posterior variance, 
i.e.,~$\lim_{N\rightarrow\infty}\sigma_N^2(\bm{x})$.
Even though the limit of infinitely many training
data cannot be reached in practice, this analysis
is important because it helps to determine the 
amount of training data which is necessary to 
achieve a desired posterior variance. In the following
corollary, we provide necessary conditions that
ensure the convergence to zero of the bound~\eqref{eq:sigbound}.
\begin{corollary}
	\label{th:varvan}
	Consider a GP with Lipschitz 
	continuous kernel~$k(\cdot,\cdot)$ , an infinitely large 
	input training data set~$\mathbb{D}_{\infty}^x=\{ \bm{x}^{(i)} 
	\}_{i=1}^{\infty}$ and the observation noise 
	variance~$\sigma_n^2$. Let 
	$\mathbb{D}_{N}^x=\{ \bm{x}^{(i)} \}_{i=1}^{N}$ 
	denote the subset of the first~$N$ input training 
	samples and let~$L_k$ be the Lipschitz constant 
	of kernel~$k(\cdot,\cdot)$. Furthermore, let 
	$\mathbb{B}_{\rho}(\bm{x})=\{ \bm{x}'
	\in\mathbb{D}_N^x:~\|\bm{x}'-\bm{x}\|\leq \rho \}$ 
	denote the training data set restricted to a ball 
	around~$\bm{x}$ with radius~$\rho$. If there 
	exists a function 
	\mbox{$\rho: \mathbb{N}\rightarrow \mathbb{R}_+$} such that 
	\begin{align}
	\rho(N)&\leq \frac{k(\bm{x},\bm{x})}{L_k}
	\quad \forall N\in\mathbb{N}\\
	\lim\limits_{N\rightarrow\infty}\rho(N)&=0\\
	\lim\limits_{N\rightarrow\infty}\left| 
	\mathbb{B}_{\rho(N)}(\bm{x}) \right|&=\infty
	\label{eq:cond2}
	\end{align}
	holds, the posterior variance at~$\bm{x}$ 
	converges to zero, i.e.~$\lim_{N\rightarrow\infty}\sigma_N(\bm{x})=0$.
\end{corollary}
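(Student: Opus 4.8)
The plan is to use \cref{th:var_bound} directly and analyze the limiting behavior of the right-hand side of \eqref{eq:sigbound} as $N\to\infty$, where we are free to evaluate that bound at the radius $\rho(N)$ supplied by the hypotheses. First I would note that the three conditions on $\rho(N)$ guarantee that for every $N$ the prerequisite $\rho(N)\le k(\bm{x},\bm{x})/L_k$ of \cref{th:var_bound} is met, so the bound
\begin{align}
\sigma_N^2(\bm{x})\le \frac{(4L_k\rho(N)-L_k^2\rho(N)^2)\left|\mathbb{B}_{\rho(N)}(\bm{x})\right|k(\bm{x},\bm{x})+\sigma_n^2k(\bm{x},\bm{x})}{\left|\mathbb{B}_{\rho(N)}(\bm{x})\right|(k(\bm{x},\bm{x})+2L_k\rho(N))+\sigma_n^2}
\label{eq:cor3bound}
\end{align}
holds for all $N\in\mathbb{N}$. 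It then suffices to show the expression on the right tends to zero.

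To take the limit, I would abbreviate $m_N=\left|\mathbb{B}_{\rho(N)}(\bm{x})\right|$ and $r_N=\rho(N)$, and divide numerator and denominator of \eqref{eq:cor3bound} by $m_N$, which is legitimate for $N$ large enough that $m_N\ge 1$ (such $N$ exist by \eqref{eq:cond2}). This rewrites the bound as
\begin{align}
\sigma_N^2(\bm{x})\le \frac{(4L_kr_N-L_k^2r_N^2)k(\bm{x},\bm{x})+\sigma_n^2k(\bm{x},\bm{x})/m_N}{k(\bm{x},\bm{x})+2L_kr_N+\sigma_n^2/m_N}.
\label{eq:cor3div}
\end{align}
Now as $N\to\infty$ we have $r_N\to 0$ by assumption and $1/m_N\to 0$ by \eqref{eq:cond2}, so every term in the numerator vanishes while the denominator converges to $k(\bm{x},\bm{x})$, which is strictly positive (otherwise the admissible radius would be $0$ and the statement is vacuous; more carefully, $k(\bm{x},\bm{x})\ge 0$ as a diagonal kernel entry, and if it were $0$ the posterior variance is already $0$). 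Hence the right-hand side of \eqref{eq:cor3div} tends to $0/k(\bm{x},\bm{x})=0$. Since $\sigma_N^2(\bm{x})\ge 0$ always, the squeeze theorem gives $\lim_{N\to\infty}\sigma_N^2(\bm{x})=0$, and therefore $\lim_{N\to\infty}\sigma_N(\bm{x})=0$ as claimed.

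I do not expect any serious obstacle here; the corollary is essentially a continuity argument applied to the bound of \cref{th:var_bound}. The only point requiring a little care is the bookkeeping around the case $k(\bm{x},\bm{x})=0$ and ensuring that the division by $m_N$ is only performed once $m_N\ge 1$, which the monotone divergence in \eqref{eq:cond2} secures for all sufficiently large $N$; the finitely many small-$N$ terms are irrelevant to the limit. One could also phrase the whole argument without dividing by $m_N$ by simply observing that the numerator of \eqref{eq:cor3bound} is $\mathcal{O}(m_Nr_N)+\mathcal{O}(1)$ while the denominator grows like $m_Nk(\bm{x},\bm{x})$, but the explicit rewriting above is the cleanest presentation.
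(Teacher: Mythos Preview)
Your proposal is correct and follows exactly the same approach as the paper: apply \cref{th:var_bound} at radius $\rho(N)$ and observe that the bound \eqref{eq:sigbound} tends to zero under the stated hypotheses. The paper's own proof is in fact a one-line remark to this effect, so your version simply spells out the limit computation in more detail than the authors deemed necessary.
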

Although it might be unintuitive that the number of 
training samples in a ball with vanishing radius has 
to reach infinity in the limit of infinite training 
data, this is not a restrictive condition. 
Deterministic sampling strategies can satisfy it, 
e.g. if a constant fraction of the samples lies on 
the considered point~$\bm{x}$ or if the maximally allowed 
distance of new samples reduces with the total number 
of samples. Furthermore, this condition is satisfied 
for a wide class of probability distributions for 
sufficiently slowly vanishing radius~$\rho(N)$ as 
shown in the following section.

\begin{remark}
\cref{th:varvan} does not require dense sampling in a neighborhood of the test 
point~$\bm{x}$. In fact, the 
conditions on the training samples in \cref{th:varvan}
are satisfied if the data is sampled densely , e.g., from 
a manifold which contains the test point~$\bm{x}$, 
such as a line through~$\bm{x}$.
\end{remark}
\subsection{Conditions on Probability Distributions for Asymptotic Convergence}
\label{subsec:probDist}
For fixed~$\rho$ it is well known that the number 
of training samples inside the ball~$\mathbb{B}_{\rho}
(\bm{x})$ converges to its expectation due to the 
strong law of large numbers. Therefore, it is 
sufficient to analyze the asymptotic behavior of the 
expected number of samples inside the ball instead of 
the actual number for fixed~$\rho$. However, it is not 
clear how fast the radius~$\rho(N)$ is allowed to 
decrease in order to ensure convergence of 
$|\mathbb{B}_{\rho(N)}(\bm{x})|$ to its expected 
value. The following theorem shows that the 
admissible order of~$\rho(N)$ depends on the local 
behavior of the density~$p(\cdot)$ around~$\bm{x}$.

\begin{theorem}
	\label{th:ball}
	Consider a sequence of points~$\mathbb{D}_{\infty}^x=
	\{ \bm{x}^{(i)} \}_{i=1}^{\infty}$ which is generated by 
	drawing from a probability distribution with 
	density~$p(\cdot)$. If there exists a 
	non-increasing function~$\rho: \mathbb{N}
	\rightarrow \mathbb{R}_+$ and constants~$c,\epsilon\in\mathbb{R}_+$ 
	such that
	\begin{align}
	\lim\limits_{N\rightarrow\infty}\rho(N)&=0\\
	\int_{\{\bm{x}'\in\mathbb{X}:\|\bm{x}
		-\bm{x}'\|\leq \rho(N)\}}p(\bm{x}')\mathrm{d}\bm{x}'
	&\geq cN^{-1+\epsilon},
	\label{eq:cond2new}
	\end{align}
	then, the sequence 
	$|\mathbb{B}_{\rho(N)}(\bm{x})|$ goes to infinity 
	almost surely, i.e.~$\lim_{N\rightarrow\infty}
	|\mathbb{B}_{\rho(N)}(\bm{x})|=\infty~ a.s.$
\end{theorem}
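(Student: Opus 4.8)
The plan is to reduce the almost-sure divergence of $|\mathbb{B}_{\rho(N)}(\bm{x})|$ to a Borel--Cantelli-type argument. Write $p_N = \int_{\{\bm{x}'\in\mathbb{X}:\|\bm{x}-\bm{x}'\|\leq \rho(N)\}}p(\bm{x}')\mathrm{d}\bm{x}'$ for the probability that a single draw lands in the ball of radius $\rho(N)$ around $\bm{x}$, and let $Z_i^{(N)}$ be the indicator that $\bm{x}^{(i)}\in\mathbb{B}_{\rho(N)}(\bm{x})$, so that $|\mathbb{B}_{\rho(N)}(\bm{x})| = \sum_{i=1}^{N} Z_i^{(N)}$. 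A subtlety to handle up front: because $\rho(\cdot)$ is non-increasing, a point $\bm{x}^{(i)}$ that lies in the ball of radius $\rho(N)$ also lies in every ball of radius $\rho(M)$ with $M\le N$, so the event $\{\bm{x}^{(i)}\in\mathbb{B}_{\rho(N)}(\bm{x})\}$ is a \emph{subset} of $\{\bm{x}^{(i)}\in\mathbb{B}_{\rho(i)}(\bm{x})\}$ for $i\le N$. Hence $|\mathbb{B}_{\rho(N)}(\bm{x})| \geq \sum_{i=1}^{N}\mathbf{1}(\bm{x}^{(i)}\in\mathbb{B}_{\rho(i)}(\bm{x}))$ is a lower bound whose summands are \emph{independent} (each $\bm{x}^{(i)}$ is drawn independently and the $i$th summand depends only on $\bm{x}^{(i)}$) but not identically distributed, the $i$th having success probability $p_i \geq ci^{-1+\epsilon}$.

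Next I would invoke the second Borel--Cantelli lemma: since $\sum_{i=1}^{\infty} p_i \geq c\sum_{i=1}^{\infty} i^{-1+\epsilon} = \infty$ for $\epsilon > 0$, and the events $\{\bm{x}^{(i)}\in\mathbb{B}_{\rho(i)}(\bm{x})\}$ are independent, infinitely many of them occur almost surely; consequently $\sum_{i=1}^{N}\mathbf{1}(\bm{x}^{(i)}\in\mathbb{B}_{\rho(i)}(\bm{x})) \to \infty$ a.s., and therefore $|\mathbb{B}_{\rho(N)}(\bm{x})|\to\infty$ a.s. Strictly speaking Borel--Cantelli only gives that the partial sums are unbounded, which for a sequence of nonnegative integer increments already yields divergence to $+\infty$. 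This is the core of the argument; the hypothesis $\lim_{N\to\infty}\rho(N)=0$ is not actually needed for divergence itself — it is the assumption that makes this theorem useful as an instance of condition~\eqref{eq:cond2} in Corollary~\ref{th:varvan}, so I would remark on that rather than use it in the proof.

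The main obstacle I anticipate is the monotonicity bookkeeping in the first paragraph: one must be careful that the events fed into Borel--Cantelli are genuinely independent, which forces the switch from the "fixed-$N$" indicators $Z_i^{(N)}$ (whose joint law across $i$ is fine, but which cannot be compared across different $N$ without the nesting observation) to the "diagonal" indicators $\mathbf{1}(\bm{x}^{(i)}\in\mathbb{B}_{\rho(i)}(\bm{x}))$. Once that reduction is in place the rest is the standard divergence-of-a-series/Borel--Cantelli routine, so I would also double-check the edge cases: that $\rho(N)>0$ for all $N$ (given, since $\rho:\mathbb{N}\to\mathbb{R}_+$) so each $p_i$ is well-defined and positive, and that the harmonic-type lower bound $\sum i^{-1+\epsilon}=\infty$ genuinely requires only $\epsilon\ge 0$ while the problem gives $\epsilon>0$, leaving a comfortable margin.
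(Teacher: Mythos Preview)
Your reduction step is where the argument breaks. You correctly observe that, since $\rho$ is non-increasing, $\{\bm{x}^{(i)}\in\mathbb{B}_{\rho(N)}(\bm{x})\}\subset\{\bm{x}^{(i)}\in\mathbb{B}_{\rho(i)}(\bm{x})\}$ for $i\le N$. But an inclusion $A\subset B$ gives $\mathbf{1}_A\le\mathbf{1}_B$, not $\mathbf{1}_A\ge\mathbf{1}_B$; summing over $i$ therefore yields
\[
|\mathbb{B}_{\rho(N)}(\bm{x})|\;=\;\sum_{i=1}^{N}\mathbf{1}\bigl(\bm{x}^{(i)}\in\mathbb{B}_{\rho(N)}(\bm{x})\bigr)\;\le\;\sum_{i=1}^{N}\mathbf{1}\bigl(\bm{x}^{(i)}\in\mathbb{B}_{\rho(i)}(\bm{x})\bigr),
\]
the reverse of what you wrote. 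Your diagonal sum is an \emph{upper} bound for $|\mathbb{B}_{\rho(N)}(\bm{x})|$, so showing via the second Borel--Cantelli lemma that it diverges says nothing about $|\mathbb{B}_{\rho(N)}(\bm{x})|$ itself. This is not a typo that can be patched by flipping a sign: there is no choice of ``diagonal'' radius $r(i)$ depending only on $i$ with $r(i)\le\rho(N)$ for all $N\ge i$, because $\rho(N)\to 0$; hence no fixed sequence of independent indicators can sit below every $Z_i^{(N)}$.

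The paper's proof avoids this obstruction by working with the full binomial count $|\mathbb{B}_{\rho(N)}(\bm{x})|\sim\mathrm{Bin}(N,\tilde p(N))$ at each $N$ and applying the \emph{first} Borel--Cantelli lemma to the deviation events $\{\,|\,|\mathbb{B}_{\rho(N)}(\bm{x})|/(N\tilde p(N))-1|>\xi\,\}$. It bounds their probabilities via a $2k$-th-moment Chebyshev inequality together with an explicit estimate of the central moments of the binomial, choosing $k$ large enough (depending on $\epsilon$) to make the series summable. If you want to keep your proof short, a cleaner route in the same spirit is a Chernoff bound: since $N\tilde p(N)\ge cN^{\epsilon}$, one has $P\bigl(|\mathbb{B}_{\rho(N)}(\bm{x})|<\tfrac{1}{2}N\tilde p(N)\bigr)\le\exp(-cN^{\epsilon}/8)$, which is summable, so $|\mathbb{B}_{\rho(N)}(\bm{x})|\ge\tfrac{c}{2}N^{\epsilon}$ eventually almost surely. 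Either way, some concentration for the triangular array is unavoidable; the independence-of-diagonal-events shortcut does not survive the reversed inequality.
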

Similarly to \cref{th:var_bound}, \cref{th:ball}
is formulated very general to be applicable to a wide 
variety of probability distributions. However, under 
additional assumptions condition \eqref{eq:cond2new} 
can be simplified. This is exemplary shown for 
probability densities which are positive in a 
neighborhood of the considered point~$\bm{x}$.\looseness=-1

\begin{corollary}
	\label{cor:varvan}
	Consider a sequence of points~$\mathbb{D}_{\infty}^x
	=\{ \bm{x}^{(i)} \}_{i=1}^{\infty}$ which is generated by 
	drawing from a probability distribution with 
	density~$p(\cdot)$, such that~$p(\cdot)$ is 
	positive in a ball around~$\bm{x}$ with any radius 
	$\xi\in\mathbb{R}_+$, i.e.
	\begin{align}
	p(\bm{x}')>0\quad\forall \bm{x}'\in\{ \bm{x}': 
	\|\bm{x}-\bm{x}'\|\leq \xi  \}.
	\end{align}
	Then, for all non-increasing 
	functions~$\rho:\mathbb{N}\rightarrow\mathbb{R}_+$ 
	for which exist~$c,\epsilon\in\mathbb{R}_+$ such 
	that
	\begin{align}
	\rho(N)&\geq cN^{-\frac{1}{d}+\epsilon}\quad 
	\forall N\in\mathbb{N}\\
	\lim\limits_{N\rightarrow\infty}\rho(N)&=0
	\end{align}
	it holds that~$\lim_{N\rightarrow\infty}|\mathbb{B}_{\rho(N)}
	(\bm{x})|=\infty~ a.s.$
\end{corollary}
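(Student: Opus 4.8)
The plan is to reduce Corollary \ref{cor:varvan} to Theorem \ref{th:ball} by verifying its hypothesis \eqref{eq:cond2new} for the given class of radius functions. Since $\rho(\cdot)$ is assumed non-increasing and $\lim_{N\to\infty}\rho(N)=0$, the only thing left to check is the probability-mass lower bound $\int_{\{\bm{x}':\|\bm{x}-\bm{x}'\|\leq\rho(N)\}}p(\bm{x}')\mathrm{d}\bm{x}'\geq c'N^{-1+\epsilon'}$ for suitable constants $c',\epsilon'\in\mathbb{R}_+$. First I would use the positivity assumption: because $p(\cdot)$ is positive on the closed ball of radius $\xi$ around $\bm{x}$, and (implicitly, as $p$ is a density) this ball can be taken compact, there is a constant $\underline{p}>0$ with $p(\bm{x}')\geq\underline{p}$ for all $\bm{x}'$ in the ball of radius $\min\{\xi,\rho(1)\}$ around $\bm{x}$; shrinking $\xi$ if necessary, assume $\xi\leq\rho(1)$ so that every ball $\mathbb{B}_{\rho(N)}(\bm{x})$ we care about eventually lies inside this region (since $\rho(N)\to 0$, there is $N_0$ with $\rho(N)\leq\xi$ for all $N\geq N_0$; the finitely many smaller indices do not affect the almost-sure limit).

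Next I would lower-bound the integral geometrically. For $N\geq N_0$ the set of integration is contained in $\mathbb{X}$ and the integrand is at least $\underline{p}$ there, so
\begin{align}
\int_{\{\bm{x}'\in\mathbb{X}:\|\bm{x}-\bm{x}'\|\leq\rho(N)\}}p(\bm{x}')\,\mathrm{d}\bm{x}'
\geq \underline{p}\;\mathrm{vol}\bigl(\{\bm{x}'\in\mathbb{X}:\|\bm{x}-\bm{x}'\|\leq\rho(N)\}\bigr)
\geq \underline{p}\,\kappa\,\rho(N)^d,
\end{align}
where $\kappa>0$ accounts for the volume of a $d$-dimensional ball of unit radius intersected with $\mathbb{X}$ near $\bm{x}$ (a fixed positive fraction, since $\bm{x}$ lies in the interior or, in the worst case, on a corner of the cube, which still leaves a constant fraction of the ball; for the running example $\mathbb{X}=[0,1]$, $d=1$ and $\kappa=1$ away from the endpoints and $\kappa=1/2$ at an endpoint). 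Now invoke the hypothesis $\rho(N)\geq cN^{-\frac1d+\epsilon}$, which gives $\rho(N)^d\geq c^d N^{-1+d\epsilon}$, hence
\begin{align}
\int_{\{\bm{x}'\in\mathbb{X}:\|\bm{x}-\bm{x}'\|\leq\rho(N)\}}p(\bm{x}')\,\mathrm{d}\bm{x}'
\geq \underline{p}\,\kappa\,c^d\,N^{-1+d\epsilon}.
\end{align}
Setting $c'=\underline{p}\kappa c^d$ and $\epsilon'=d\epsilon$, both positive, this is exactly condition \eqref{eq:cond2new} of Theorem \ref{th:ball} (for all $N\geq N_0$, and the earlier terms are irrelevant to the limit). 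Theorem \ref{th:ball} then yields $\lim_{N\to\infty}|\mathbb{B}_{\rho(N)}(\bm{x})|=\infty$ almost surely, which is the claim.

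The only mild obstacle is the geometric volume estimate near the boundary of $\mathbb{X}$: one must argue that for small enough $\rho$ the Lebesgue measure of $\mathbb{B}_{\rho}(\bm{x})\cap\mathbb{X}$ is bounded below by a constant multiple of $\rho^d$ uniformly in $\rho$. For a box this is elementary — the intersection always contains an axis-aligned sub-box of side comparable to $\rho$ — but it is the one place where the shape of $\mathbb{X}$ enters, so I would state it as the reason the constant $\kappa$ depends only on $\bm{x}$ and $\mathbb{X}$ and not on $N$. Everything else is bookkeeping: discarding finitely many initial indices, and combining the uniform density lower bound with the polynomial lower bound on $\rho(N)$.
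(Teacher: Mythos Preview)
Your proposal is correct and follows essentially the same route as the paper: take the infimum $\bar{p}$ of the density on the ball of radius $\xi$, lower-bound the probability mass of $\mathbb{B}_{\rho(N)}(\bm{x})$ by $\bar{p}$ times the volume $\propto\rho(N)^d$, substitute $\rho(N)\geq cN^{-1/d+\epsilon}$ to obtain a bound of the form $c'N^{-1+\epsilon'}$, and invoke Theorem~\ref{th:ball}. You are in fact a bit more careful than the paper in two places --- you track the eventual containment $\rho(N)\leq\xi$ and you introduce the factor $\kappa$ for the intersection with $\mathbb{X}$, whereas the paper simply uses the full unit-ball volume $V_d$ --- but these are refinements of the same argument, not a different one.
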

This corollary shows that it is relatively 
simple to allow the maximum decay rate of 
$\rho(N)\approx N^{-1}$ for scalar inputs. 
For higher dimensions~$d$ however, it cannot 
be achieved and the allowed decay rate decreases 
exponentially with~$d$. Yet, this is merely a 
consequence of the curse of dimensionality.\looseness=-1

\subsection{Application to Average Learning Curves}
\label{subsec:app_alc}
\setlength{\textfloatsep}{12pt}
\setlength{\floatsep}{6pt}

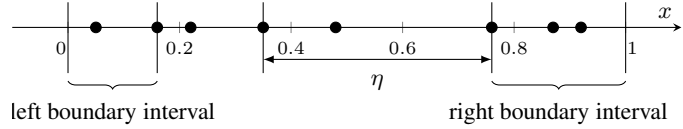
\begin{figure}
	\center
	\begin{minipage}{0.75\textwidth}
		\tikzset{fontscale/.style = {font=\footnotesize}}
		\center
		\begin{tikzpicture}
		\begin{axis}[xmin=-0.1,xmax=1.1,ymin=-6,ymax=5, samples=100,
		grid=none, axis y line=left, axis x line=middle, hide y axis,
		xlabel=$x$,ylabel={$\sigma_N^2$}, xtick={0,0.2,0.4,0.6,0.8,1}, 
		xticklabels={$0~~~$,$0.2$,$0.4$,$0.6$,$0.8$,$~~~1$}
		]
		\addplot[only marks, mark=*] coordinates {(0.05,0) (0.16,0)
			(0.22,0) (0.35,0) (0.48,0) (0.76,0) (0.87,0) (0.92,0)};
		\draw (axis cs: 0,1.1)--(axis cs: 0,-2.0);
		\draw (axis cs: 0.16,1.1)--(axis cs: 0.16,-2.0);
		\draw (axis cs: 0.35,1.1)--(axis cs: 0.35,-2.0);
		\draw (axis cs: 0.76,1.1)--(axis cs: 0.76,-2.0);
		\draw (axis cs: 1,1.1)--(axis cs: 1,-2.0);
		\draw [decorate, decoration={brace,amplitude=3pt,mirror,raise=2pt}, yshift=-1pt]
		(axis cs:0.0,-2.0) -- (axis cs:0.16,-2.0) node [black,midway,xshift=0cm,yshift=-0.5cm,align=center,fontscale=1] {left boundary interval};
		\draw [decorate, decoration={brace,amplitude=3pt,mirror,raise=2pt}, yshift=-1pt]
		(axis cs:0.76,-2.0) -- (axis cs:1,-2.0) node [black,midway,xshift=0cm,yshift=-0.5cm,align=center,fontscale=1] {right boundary interval};
		\draw[<->,>=latex] (0.35,-1.4)--(0.76,-1.4) node [black, midway,yshift=-0.3cm,align=center,fontscale=1] {$\eta$};		\end{axis}
		\end{tikzpicture}
	\end{minipage}
	\vspace{-0.2cm}
	\caption{Fixed training data set with~$N=8$ and division such that inner 
	intervals have~$n=3$ samples, 
		left boundary interval~$n_l=2$ samples and right boundary 
		interval~$n_r=3$ samples}
	\label{fig:intervals}
\end{figure}
Both posterior variance bounds in~\cite{Williams2000}
suffer from the fact that they do not converge to zero
in the limit of infinite training data. However, the 
idea used in \cite{Williams2000} to derive \eqref{eq:sig1bound} and 
\eqref{eq:sig2bound} is the same as in \cref{th:var_bound}.
In fact, \eqref{eq:sig1bound} can be seen as a 
special case of our bound in \cref{cor:var_bound} with 
\mbox{$|\mathbb{B}_{\rho}(\bm{x})|=1$}. Therefore, it is natural to
employ \eqref{eq:isobound} for the derivation of 
average learning curve bounds by choosing $\rho$ such that 
\mbox{$|\mathbb{B}_{\rho}(\bm{x})|=n>1$}. Furthermore, 
we divide the unit interval in~$m=\left\lceil 
(N-2n+1)/(n-1) \right\rceil$ inner sections and two
boundary sections as depicted in \cref{fig:intervals} for~$N=8$ 
and~$n=3$. The inner sections are chosen
such that each of them starts and ends at a 
training sample and contains exactly~$n>1$ of them. 
The remaining~$N-m(n-1)+1$ training points are 
divided fairly among the two boundaries: the left 
boundary section contains~$n_l=\left\lfloor 
(N-m(n-1)+1)/2 \right\rfloor$ and the right 
boundary section contains~$n_r=\left\lceil 
(N-m(n-1)+1)/2 \right\rceil$ training samples 
such that they stop and start at training samples, 
respectively. Hence, we can bound the average 
learning curve by\looseness=-1
\begin{align}
\bar{e}_{\rho}(N)\!&=\! k(0)\!+\!\sigma_n^2\!-\!2m \int_{0}^{1}\!\frac{I_n(\delta)}{k(0)+
	\frac{\sigma_n^2}{n}}\mathrm{d}\delta\!-\!2\int_{0}^{1}\!\frac{I_{n_l+1}(\delta)}{k(0)+
	\frac{\sigma_n^2}{n_l}}\mathrm{d}\delta\!-\!2\int_{0}^{1}\!\frac{I_{n_r+1}(\delta)}{k(0)+
	\frac{\sigma_n^2}{n_r}}\mathrm{d}\delta,
\label{eq:alc_rho}\\
\text{where } I_{n}(\delta)&\!=\!\binom{N}{n-1}(1\!-\!\delta)^{N-n-1}
\delta^{n-2}\int_{\frac{\delta}{2}}^{\delta} \!
k^2(\rho)\mathrm{d}\rho,
\label{eq:int_alc}
\end{align}
due to the fact that the distance between~$n$ 
training samples follows order statistics. Note 
that the integral in \eqref{eq:int_alc} has the lower boundary 
$\frac{\delta}{2}$ since this is the minimal distance to 
either boundary. Therefore, the 
maximal distance to a training point inside the 
considered section varies between~$\frac{\delta}{2}$ 
and~$\delta$. Due to \cref{cor:varvan}, \eqref{eq:isobound} converges to 
zero for uniformly sampled training data with a suitably defined $\rho(N)$. Hence,
\eqref{eq:alc_rho} must also converge to zero for this information
radius $\rho(N)$ and is therefore capable of describing the learning
behavior for both small and large training data sets.
  \looseness=-1

\section{Numerical Evaluation}
\label{sec:numEval}
In this section we illustrate the behavior 
of the proposed bounds. \cref{subsec:VarUniform} 
compares our variance bounds to the exact posterior
variance for uniformly sampled training data
 and training data sampled from a distribution which 
 vanishes at the considered point. In 
\cref{subsec:alc} we demonstrate the derived bounds 
on average learning curves for isotropic kernels
and compare them to existing approaches.\looseness=-1

\subsection{Posterior Variance Bounds}
\label{subsec:VarUniform}
We compare the bounds in \cref{th:var_bound} and 
\cref{cor:var_bound} to the exact 
posterior variance for GPs with a 
squared exponential, a Mat\'ern kernel with 
$\nu=\frac{1}{2}$, a polynomial kernel with $p=3$ and 
a neural network kernel. The posterior variance is 
evaluated at the point $x=1$ for a uniform training 
data distribution $\mathcal{U}([0.5,1.5])$. 
Furthermore, the 
length scale of the kernels 
is set to $l=1$ where applicable and the noise 
variance is set to~$\sigma_n^2=0.1$. 
In order to obtain a good value 
for the information radius $\rho$, consider the 
following approximation of \eqref{eq:isobound} for 
isotropic kernels 
\begin{align}
\hat{\sigma}_{\rho}^2(1)\approx k(0)-
\frac{k^2(\rho)}{k(0)}+\frac{k(0)\sigma_n^2}{N\rho k(0)
	+\sigma_n^2},
\end{align}
where we use the expectation of 
$E[|\mathbb{B}_{\rho}(1)|]=N\rho$ instead of the 
random variable~$|\mathbb{B}_{\rho}(1)|$. For the 
squared exponential kernel the Taylor 
expansion around~$\rho=0$ yields
\begin{align}
k(0)-\frac{k^2(\rho)}{k(0)}\approx 
2\frac{\rho^2}{l^2}+\mathcal{O}(\rho^3).
\end{align}
Therefore, for large~$N$ the best asymptotic behavior 
of \eqref{eq:isobound} 
is achieved with~$\rho(N)=cN^{-\frac{1}{3}}$ 
for the squared exponential kernel under uniform sampling
and leads 
to~$\hat{\sigma}_{\rho}^2(1)\approx \mathcal{O}
(N^{-\frac{2}{3}})$.
The same approach can 
be used to calculate the information radius~$\rho(N)$ with 
the best asymptotic behavior of the bound in
\cref{cor:var_bound} for the Mat\'ern kernel
 with~$\nu=\frac{1}{2}$. This leads
to~$\rho(N)=cN^{-\frac{1}{2}}$ and an asymptotic 
behavior of~$\hat{\sigma}_{\rho}^2(1)\approx 
\mathcal{O}(N^{-\frac{1}{2}})$. For the 
non-isotropic kernels, we pursue a similar 
approach and substitute the expected number of
samples~$N\rho$ in \eqref{eq:sigbound}, 
which results in the asymptotically
optimal~$\rho(N)=cN^{-\frac{1}{2}}$ and 
$\hat{\sigma}_{\rho}^2(1)\approx 
\mathcal{O}(N^{-\frac{1}{2}})$. For these functions
$\rho(N)$, the posterior variance 
bound~$\bar{\sigma}_{\rho}^2(1)$ from \cref{th:var_bound}
and the bound~$\hat{\sigma}_{\rho}^2(1)$ from 
\cref{cor:var_bound} together with the 
exact posterior variance $\sigma_{\mathrm{num}}^2(1)$ 
averaged over~$20$ different 
training data sets are illustrated 
in \cref{fig:varUni}. \looseness=-1

\begin{figure}
	\centering
	\begin{minipage}{0.75\textwidth}
	\centering
		\begin{tikzpicture} 
		\begin{axis}[%
		hide axis,
		xmin=10,
		xmax=50,
		ymin=0,
		ymax=0.4,
		legend columns = 4,
		legend style={draw=white!15!black,legend cell align=left}
		]
		\addlegendimage{black}
		\addlegendentry{$\sigma^2_{\mathrm{num}}(1)$};
		\addlegendimage{blue}
		\addlegendentry{$\hat{\sigma}_{\rho}^2(1)$ from Cor.~\ref{cor:var_bound}};
		\addlegendimage{red}
		\addlegendentry{$\bar{\sigma}_{\rho}^2(1)$ from Thm.~\ref{th:var_bound}};
		\end{axis}
		\end{tikzpicture}
	\end{minipage}\\
	\begin{minipage}{0.47\textwidth}
		\centering
		\def\file{var_uniform_squaredexponential.txt}
		\tikzsetnextfilename{var_SE}
		\begin{tikzpicture}
		\begin{loglogaxis}[xmin=0,xmax=1220,ymin=0.00001,ymax=5.2, samples=100,
		grid=none, axis y line=left, axis x line=bottom, 
		ylabel={$\sigma_N^2$}, 
		scaled x ticks=false,legend pos=south west, 
		]
		\addplot[black] table[x = idx,y = sig_m ]{\file};
		\addplot[blue] table[x = idx,y = sig_bm ]{\file};
		\addplot[red] table[x = idx,y = sig_bm_gen ]{\file};
		\end{loglogaxis}
		\end{tikzpicture}
	\end{minipage}\hfill
	\begin{minipage}{0.47\textwidth}
		\centering
		\def\file{var_uniform_exponential.txt}
		\tikzsetnextfilename{var_exp}
		\begin{tikzpicture}
		\begin{loglogaxis}[xmin=0,xmax=1220,ymin=0.001,ymax=5.2, samples=100,
		grid=none, axis y line=left, axis x line=bottom, 
		ylabel={$\sigma_N^2$}, 
		scaled x ticks=false, 
		legend columns = 2, 
		legend style={at={(-0.215,-0.55)}, anchor= west}]
		\addplot[black] table[x = idx,y = sig_m ]{\file};
		\addplot[blue] table[x = idx,y = sig_bm ]{\file};
		\addplot[red] table[x = idx,y = sig_bm_gen ]{\file};
		\end{loglogaxis}
		\end{tikzpicture}
	\end{minipage}
\begin{minipage}{0.47\textwidth}
		\centering
		\def\file{var_uniform_polynomial.txt}
		\tikzsetnextfilename{var_poly}
		\begin{tikzpicture}
		\begin{loglogaxis}[xmin=0,xmax=1220,ymin=0.00002,ymax=5.2, samples=100,
		grid=none, axis y line=left, axis x line=bottom, 
		xlabel=$N$,ylabel={$\sigma_N^2$}, 
		scaled x ticks=false,legend pos=south west, 
		]
		\addplot[black] table[x = idx,y = sig_m ]{\file};
		\addplot[red] table[x = idx,y = sig_bm_gen ]{\file};
		\end{loglogaxis}
		\end{tikzpicture}
	\end{minipage}\hfill
	\begin{minipage}{0.47\textwidth}
		\centering
		\def\file{var_uniform_neuralnetwork.txt}
		\tikzsetnextfilename{var_nn}
		\begin{tikzpicture}
		\begin{loglogaxis}[xmin=0,xmax=1220,ymin=0.00005,ymax=.5, samples=100,
		grid=none, axis y line=left, axis x line=bottom, 
		xlabel=$N$,ylabel={$\sigma_N^2$}, 
		scaled x ticks=false, 
		legend columns = 2, 
		legend style={at={(-0.215,-0.55)}, anchor= west}]
		\addplot[black] table[x = idx,y = sig_m ]{\file};
		\addplot[red] table[x = idx,y = sig_bm_gen ]{\file};
		\end{loglogaxis}
		\end{tikzpicture}
	\end{minipage}

	\vspace{-0.25cm}	
	\caption{Average posterior variance and bounds 
		of the squared exponential (top left), the Mat\'ern
		kernel with~$\nu=\frac{1}{2}$ (top right), the 
		polynomial kernel with~$p=3$ (bottom left) and the 
		neural network kernel (bottom right) for 
		uniformly sampled training data}
	\label{fig:varUni}
\end{figure}
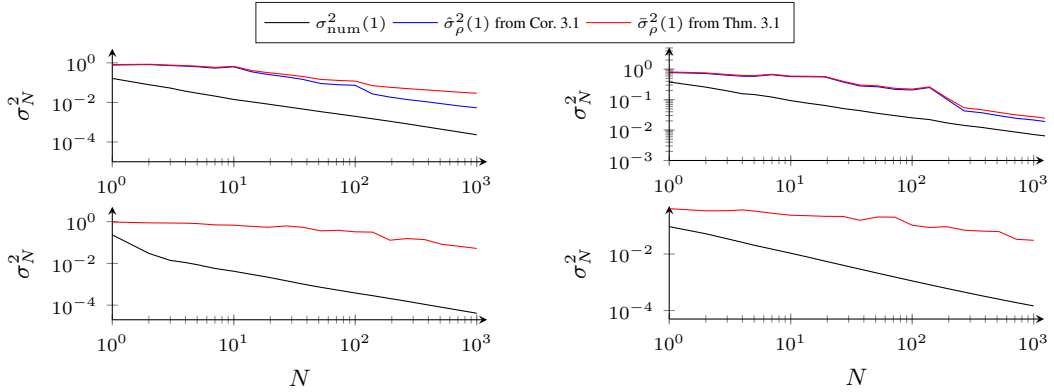

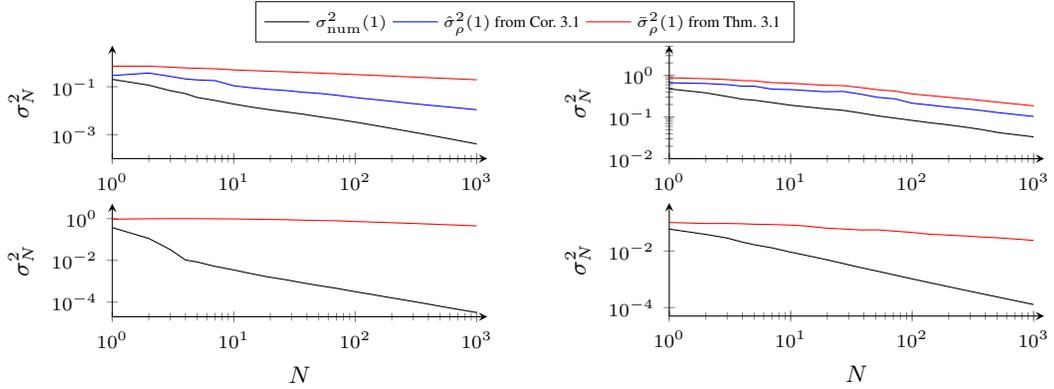
\begin{figure}[t]
	\center
	\begin{minipage}{0.75\textwidth}
	\center
		\begin{tikzpicture} 
		\begin{axis}[%
		hide axis,
		xmin=10,
		xmax=50,
		ymin=0,
		ymax=0.4,
		legend columns = 4,
		legend style={draw=white!15!black,legend cell align=left}
		]
		\addlegendimage{black}
		\addlegendentry{$\sigma^2_{\mathrm{num}}(1)$};
		\addlegendimage{blue}
		\addlegendentry{$\hat{\sigma}_{\rho}^2(1)$ from Cor.~\ref{cor:var_bound}};
		\addlegendimage{red}
		\addlegendentry{$\bar{\sigma}_{\rho}^2(1)$ from Thm.~\ref{th:var_bound}};
		\end{axis}
		\end{tikzpicture}
	\end{minipage}\\
	\begin{minipage}{0.47\textwidth}
	\centering
	\def\file{var_vanishing_squaredexponential.txt}
	\tikzsetnextfilename{var_SE_vanishing}
	\begin{tikzpicture}
	\begin{loglogaxis}[xmin=0,xmax=1220,ymin=0.0001,
	ymax=5.2, samples=100,
	grid=none, axis y line=left, axis x line=bottom, 
	ylabel={$\sigma_N^2$}, 
	scaled x ticks=false,legend pos=south west, 
	]
	\addplot[black] table[x = idx,y = sig_m ]{\file};
	\addplot[blue] table[x = idx,y = sig_bm ]{\file};
	\addplot[red] table[x = idx,y = sig_bm_gen ]{\file};
	\end{loglogaxis}
	\end{tikzpicture}
	\end{minipage}\hfill
	\begin{minipage}{0.47\textwidth}
		\centering
		\def\file{var_vanishing_exponential.txt}
		\tikzsetnextfilename{var_exp_vanishing}
		\begin{tikzpicture}
		\begin{loglogaxis}[xmin=0,xmax=1220,ymin=0.01,
		ymax=5.2, samples=100, grid=none, axis y line=left, 
		axis x line=bottom, ylabel={$\sigma_N^2$},
		scaled x ticks=false, 
		legend columns = 2, 
		legend style={at={(-0.215,-0.55)}, anchor= west}]
		\addplot[black] table[x = idx,y = sig_m ]{\file};
		\addplot[blue] table[x = idx,y = sig_bm ]{\file};
		\addplot[red] table[x = idx,y = sig_bm_gen ]{\file};
		\end{loglogaxis}
		\end{tikzpicture}
	\end{minipage}
\begin{minipage}{0.47\textwidth}
		\centering
		\def\file{var_vanishing_polynomial.txt}
		\tikzsetnextfilename{var_poly}
		\begin{tikzpicture}
		\begin{loglogaxis}[xmin=0,xmax=1220,ymin=0.00002,ymax=5.2, samples=100,
		grid=none, axis y line=left, axis x line=bottom, 
		xlabel=$N$,ylabel={$\sigma_N^2$}, 
		scaled x ticks=false,legend pos=south west, 
		]
		\addplot[black] table[x = idx,y = sig_m ]{\file};
		\addplot[red] table[x = idx,y = sig_bm_gen ]{\file};
		\end{loglogaxis}
		\end{tikzpicture}
	\end{minipage}\hfill
	\begin{minipage}{0.47\textwidth}
		\centering
		\def\file{var_vanishing_neuralnetwork.txt}
		\tikzsetnextfilename{var_nn}
		\begin{tikzpicture}
		\begin{loglogaxis}[xmin=0,xmax=1220,ymin=0.00005,ymax=.5, samples=100,
		grid=none, axis y line=left, axis x line=bottom, 
		xlabel=$N$,ylabel={$\sigma_N^2$}, 
		scaled x ticks=false, 
		legend columns = 2, 
		legend style={at={(-0.215,-0.55)}, anchor= west}]
		\addplot[black] table[x = idx,y = sig_m ]{\file};
		\addplot[red] table[x = idx,y = sig_bm_gen ]{\file};
		\end{loglogaxis}
		\end{tikzpicture}
	\end{minipage}		
	
	\vspace{-0.25cm}
	\caption{Average posterior variance and bounds 
		of the squared exponential (top left), the Mat\'ern
		kernel with~$\nu=\frac{1}{2}$ (top right), the 
		polynomial kernel with~$p=3$ (bottom left) and the 
		neural network kernel (bottom right) for 
		training data sampled from vanishing distribution}
	\label{fig:VarVan}
\end{figure}

We also compare the bounds in \cref{th:var_bound} and 
\cref{cor:var_bound} to the exact 
posterior variance for training data sampled from 
the distribution with density function
\begin{align}
p(x)=4|1-x|,\quad 0.5\leq x\leq 1.5.
\end{align}
This probability density vanishes at the test point 
$x=1$ and it leads to~$\tilde{p}(N)=4\rho^2(N)$
for~\mbox{$\rho(N)\leq 0.5$}. By employing a Taylor 
expansion of the kernel around the test point,
we can derive the optimal asymptotic decay rates
for~$\rho(N)$ as in the previous section. 
For the isotropic and the Mat\'ern kernel, this leads to 
$\rho(N)=cN^{-\frac{1}{3}}$ and an asymptotic 
behavior of the posterior variance~$\sigma_N^2(1)\approx
\mathcal{O}(N^{-\frac{1}{3}})$. For the squared 
exponential kernel, a slightly faster decreasing 
$\rho(N)=cN^{-\frac{1}{4}}$ can be chosen, which 
results in~$\hat{\sigma}_{\rho}^2(1)\approx 
\mathcal{O}(N^{-\frac{1}{2}})$.
The curves for the bounds~$\bar{\sigma}_{\rho}^2(1)$ 
from \cref{th:var_bound} and~$\hat{\sigma}_{\rho}^2(1)$
from \cref{cor:var_bound} as well as the exact posterior 
variance averaged over~$20$ different training 
data sets for 
the vanishing training sample distribution are 
illustrated in \cref{fig:VarVan}. \looseness=-1

The posterior variance bounds for the isotropic
squared exponential and Mat\'ern kernel exhibit 
a similar decrease rate as the actually observed 
one in \cref{fig:varUni} and \cref{fig:VarVan}. 
Indeed, the bound for the
Mat\'ern kernel shows the exact same behavior
and only differs by a constant factor for large~$N$. 
However, for non-isotropic kernels, our 
bound in \cref{th:var_bound} is rather loose 
as it converges with~$\mathcal{O}(N^{-\frac{1}{2}})$ 
while the true posterior variance exhibits a decay
rate of approximately~$\mathcal{O}(N^{-1})$ for the 
uniform distribution in \cref{fig:varUni}. Furthermore, 
no difference of the decrease rate of the numerically
estimated posterior variance can be observed 
between both figures, whereas our bound decreases slightly
slower for the vanishing probability distribution in \cref{fig:VarVan}. 
These two observations are caused
by the non-isotropy of these kernels: they consider
data globally, while our bound only decreases when
training points are added locally around the test point.
However, this problem can be overcome by exploiting
the special structure of these bounds similarly as in 
\cref{cor:var_bound}, e.g., by using a more suitable distance
metric in \cref{th:var_bound} to define the information radius~$\rho$. 
Furthermore, the guaranteed decay 
rate of the variance is already sufficient to ensure that the 
uniform error bounds in~\cite{Srinivas2012,Chowdhury2017a}
converge to zero for kernels such as, e.g., the linear 
covariance kernel.\looseness=-1

\subsection{Average Learning Curves}
\label{subsec:alc}
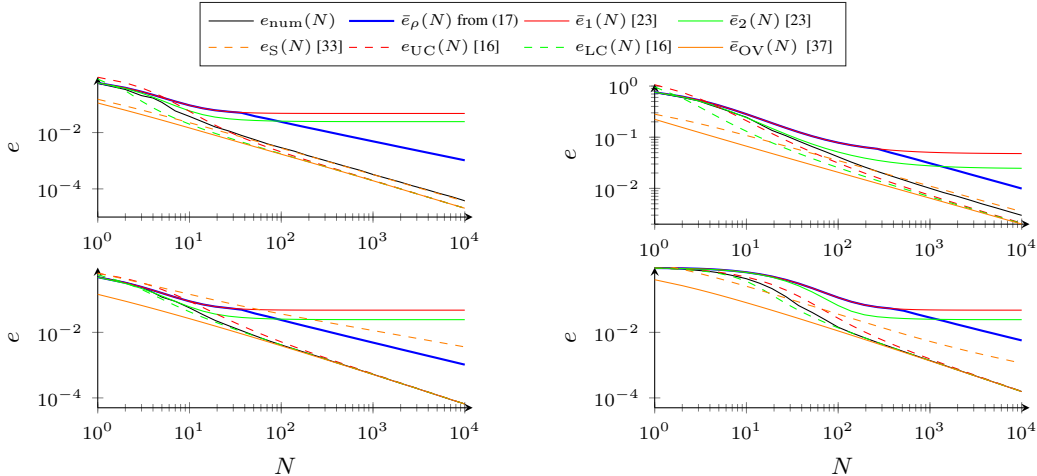
\begin{figure}[!t]
	\pgfplotsset{width=5\columnwidth /5, compat = 1.13, 
		height = 52.5\columnwidth /100, grid= major, 
		legend cell align = left, ticklabel style = {font=\scriptsize},
		every axis label/.append style={font=\small},
		legend style = {font=\tiny},title style={yshift=-7pt, font = \small} }
	
	\center
	\begin{minipage}{0.75\textwidth}
	\center
		\begin{tikzpicture} 
		\begin{axis}[%
		hide axis,
		xmin=10,
		xmax=50,
		ymin=0,
		ymax=0.4,
		legend columns = 4,
		legend style={draw=white!15!black,legend cell align=left}
		]
		\addlegendimage{black}
		\addlegendentry{$e_{\mathrm{num}}(N)$};
		\addlegendimage{blue,thick}
		\addlegendentry{$\bar{e}_{\rho}(N)$ from \eqref{eq:alc_rho}};
		\addlegendimage{red}
		\addlegendentry{$\bar{e}_1(N)$~\cite{Williams2000} };
		\addlegendimage{green}
		\addlegendentry{$\bar{e}_2(N)$~\cite{Williams2000} };
		\addlegendimage{orange,dashed}
		\addlegendentry{$e_{\mathrm{S}}(N)$~\cite{Sarkka2013} };
		\addlegendimage{red,dashed}
		\addlegendentry{$e_{\mathrm{UC}}(N)$~\cite{Sollich2002} };
		\addlegendimage{green,dashed}
		\addlegendentry{$e_{\mathrm{LC}}(N)$~\cite{Sollich2002} };
		\addlegendimage{orange}
		\addlegendentry{$\bar{e}_{\mathrm{OV}}(N)$ ~\cite{Opper1999} };
		\end{axis}
		\end{tikzpicture}
	\end{minipage}\\
	\begin{minipage}{0.47\textwidth}
		\centering
		\def\file{alcsquaredexponential.txt}
		\tikzsetnextfilename{alc_SE}
		\begin{tikzpicture}
		\begin{loglogaxis}[xmin=0,xmax=12200,ymin=0.00001,
		ymax=0.95, samples=100, grid=none, axis y line=left, 
		axis x line=bottom,ylabel={$e$}, 
		scaled x ticks=false,legend pos=south west, 
		]
		\addplot[black] table[x = idx,y = y_exact ]{\file};
		\addplot[blue,thick] table[x = idx,y = y_bound ]{\file};
		\addplot[red] table[x = idx,y = yE1 ]{\file};
		\addplot[green] table[x = idx,y = yE2 ]{\file};
		\addplot[red,dashed] table[x = idx,y = y_uc ]{\file};
		\addplot[green,dashed] table[x = idx,y = y_lc ]{\file};
		\addplot[orange,dashed] table[x = idx,y = y_S ]{\file};
		\addplot[orange] table[x = idx,y = y_ov ]{\file};
		\end{loglogaxis}
		\end{tikzpicture}
	\end{minipage}\hfill
	\begin{minipage}{0.47\textwidth}
		\centering
		\def\file{alcexponential.txt}
		\tikzsetnextfilename{alc_exp}
		\begin{tikzpicture}
		\begin{loglogaxis}[xmin=0,xmax=12200,ymin=0.002,
		ymax=1.1, samples=100, grid=none, axis y line=left, 
		axis x line=bottom, ylabel={$e$}, 
		scaled x ticks=false,legend pos=south west, 
		legend columns = 4, 
		legend style={at={(-0.215,-0.55)}, anchor= west}]
		\addplot[black] table[x = idx,y = y_exact ]{\file};
		\addplot[blue,thick] table[x = idx,y = y_bound ]{\file};
		\addplot[red] table[x = idx,y = yE1 ]{\file};
		\addplot[green] table[x = idx,y = yE2 ]{\file};
		\addplot[red,dashed] table[x = idx,y = y_uc ]{\file};
		\addplot[green,dashed] table[x = idx,y = y_lc ]{\file};
		\addplot[orange,dashed] table[x = idx,y = y_S ]{\file};
		\addplot[orange] table[x = idx,y = y_ov ]{\file};
		\end{loglogaxis}
		\end{tikzpicture}
	\end{minipage}\\
	\begin{minipage}{0.47\textwidth}
		\centering
		\def\file{alcrationalquadratic.txt}
		\tikzsetnextfilename{alc_ratquad}
		\begin{tikzpicture}
		\begin{loglogaxis}[xmin=0,xmax=12200,ymin=0.00005,
		ymax=0.95, samples=100, grid=none, axis y line=left, 
		axis x line=bottom, xlabel=$N$,ylabel={$e$}, 
		scaled x ticks=false,legend pos=south west, 
		]
		\addplot[black] table[x = idx,y = y_exact ]{\file};
		\addplot[blue,thick] table[x = idx,y = y_bound ]{\file};
		\addplot[red] table[x = idx,y = yE1 ]{\file};
		\addplot[green] table[x = idx,y = yE2 ]{\file};
		\addplot[red,dashed] table[x = idx,y = y_uc ]{\file};
		\addplot[green,dashed] table[x = idx,y = y_lc ]{\file};
		\addplot[orange,dashed] table[x = idx,y = y_S ]{\file};
		\addplot[orange] table[x = idx,y = y_ov ]{\file};
		\end{loglogaxis}
		\end{tikzpicture}
	\end{minipage}\hfill
	\begin{minipage}{0.47\textwidth}
		\centering
		\def\file{alcperiodic.txt}
		\tikzsetnextfilename{alc_per}
		\begin{tikzpicture}
		\begin{loglogaxis}[xmin=0,xmax=12200,ymin=0.00005,
		ymax=0.95, samples=100, grid=none, axis y line=left, 
		axis x line=bottom, xlabel=$N$,ylabel={$e$}, 
		scaled x ticks=false,legend pos=south west, 
		legend columns = 4, 
		legend style={at={(-0.215,-0.55)}, anchor= west}]
		\addplot[black] table[x = idx,y = y_exact ]{\file};
		\addplot[blue,thick] table[x = idx,y = y_bound ]{\file};
		\addplot[red] table[x = idx,y = yE1 ]{\file};
		\addplot[green] table[x = idx,y = yE2 ]{\file};
		\addplot[red,dashed] table[x = idx,y = y_uc ]{\file};
		\addplot[green,dashed] table[x = idx,y = y_lc ]{\file};
		\addplot[orange,dashed] table[x = idx,y = y_S ]{\file};
		\addplot[orange] table[x = idx,y = y_ov ]{\file};
		\end{loglogaxis}
		\end{tikzpicture}
	\end{minipage}

	\vspace{-0.25cm}
	\caption{Average learning curve approximations and bounds for the squared exponential (top left),
		the Mat\'ern kernel (top right), the rational quadratic kernel (bottom left) and the 
		periodic kernel (bottom right); the novel bound \eqref{eq:alc_rho} converges to zero in contrast to the existing
		upper bounds from \cite{Williams2000} \looseness=-1}
	\label{fig:avl}
	\vspace{-0.0cm}
\end{figure}

We pursue a greedy approach to choose~$n$ in our learning
curve bound \eqref{eq:alc_rho}. We start with 
$n=1$ at~$N=1$ and increase~$n$ until it reaches a 
local minimum. For~$N>1$, we start with the value of
~$n$ from the previous step and perform the 
same optimization. Note, that the bound \eqref{eq:alc_rho} 
is only defined for~$n>1$. Therefore, we make use 
of \eqref{eq:alc_1} for~$n=1$. We compare our learning
curve bound \eqref{eq:alc_rho} to a numerical approximation 
of the learning curve~$e_{\mathrm{num}}(N)$ obtained 
by averaging over~$1000$ test points and~$50$ training 
data sets for each point in the average learning curve. 
Furthermore, we evaluate the lower and upper continuous 
average learning curve approximations~$e_{\mathrm{LC}}(N)$
and~$e_{\mathrm{UC}}(N)$~\cite{Sollich2002}, respectively, 
as well as the approximation suggested in~\cite{Sarkka2013},
which are based on spectral methods. Moreover, we compare 
our bound to the average learning curve bounds 
\eqref{eq:alc_1} and \eqref{eq:alc2}
proposed in~\cite{Williams2000}. Finally, 
the lower bound derived in~\cite{Opper1999} is evaluated.
The results of this comparison for the squared exponential, the Mat\'ern, the rational quadratic and the periodic kernel
with~$l=0.3$ and noise variance~$\sigma_n^2=0.05$ are depicted in \cref{fig:avl}. 
Note that~$\sigma_n^2$ has been 
subtracted from all curves for illustrative purposes.\looseness=-1

Due to the use of \eqref{eq:alc_1} in our average learning curve bound 
for~$n=1$, both curves are identical 
at the beginning of the plots in \cref{fig:avl}. 
However, for large
$N$ our bound outperforms both average learning curve
bounds~$\bar{e}_1(N)$ and~$\bar{e}_2(N)$.
In comparison to the average learning curve 
approximations~$e_S(N)$,~$e_{\mathrm{UC}}(N)$
and~$e_{\mathrm{UL}}(N)$ our average learning curve
bound typically differs more strongly from the numerical
learning curve~$e_{\mathrm{num}}(N)$ as 
depicted in \cref{fig:avl}.
However, these are only approximations, hence
there is no guarantee that they do not intersect
with the true average learning curve. In fact,
intersections with~$e_{\mathrm{num}}(N)$ can
be observed for most of the kernels in 
\cref{fig:avl}. Moreover, it should be
noted that the asymptotic behavior
of our bound usually does not differ a lot
from the true average learning curve. 
In
fact, we can observe the true decay rate of 
$\mathcal{O}(N^{-\frac{1}{2}})$ for the 
Mat\'ern kernel~\cite{Opper1997}.\looseness=-1

\widowpenalty=10000
\section{Conclusion}
\label{sec:conclusion}
In this paper we present a novel bound for the 
posterior variance of Gaussian processes with 
Lipschitz continuous kernels. We 
develop conditions that guarantee its convergence 
to zero and investigate probability distributions 
that satisfy these conditions. Furthermore, we 
demonstrate how the bound can be specialized 
to smaller classes of kernels and extend it to 
average learning curve bounds, which can be used 
for a learning comparison between different kernels.

\medskip

\small
\bibliographystyle{IEEEtran}
\bibliography{example_paper}

\begin{thebibliography}{10}
\providecommand{\url}[1]{#1}
\csname url@samestyle\endcsname
\providecommand{\newblock}{\relax}
\providecommand{\bibinfo}[2]{#2}
\providecommand{\BIBentrySTDinterwordspacing}{\spaceskip=0pt\relax}
\providecommand{\BIBentryALTinterwordstretchfactor}{4}
\providecommand{\BIBentryALTinterwordspacing}{\spaceskip=\fontdimen2\font plus
\BIBentryALTinterwordstretchfactor\fontdimen3\font minus
  \fontdimen4\font\relax}
\providecommand{\BIBforeignlanguage}[2]{{%
\expandafter\ifx\csname l@#1\endcsname\relax
\typeout{** WARNING: IEEEtran.bst: No hyphenation pattern has been}%
\typeout{** loaded for the language `#1'. Using the pattern for}%
\typeout{** the default language instead.}%
\else
\language=\csname l@#1\endcsname
\fi
#2}}
\providecommand{\BIBdecl}{\relax}
\BIBdecl

\bibitem{Rasmussen2006}
C.~E. Rasmussen and C.~K.~I. Williams, \emph{{Gaussian processes for machine
  learning}}.\hskip 1em plus 0.5em minus 0.4em\relax The MIT Press, 2006.

\bibitem{Berkenkamp}
F.~Berkenkamp, A.~Krause, and A.~P. Schoellig, ``{Bayesian Optimization with
  Safety Constraints: Safe Automatic Parameter Tuning in Robotics},'' ETH
  Z{\"{u}}rich, Z{\"{u}}rich, Tech. Rep., 2016.

\bibitem{Berkenkamp2016}
F.~Berkenkamp, A.~P. Schoellig, and A.~Krause, ``{Safe Controller Optimization
  for Quadrotors with Gaussian Processes},'' in \emph{Proceedings of the IEEE
  International Conference on Robotics and Automation}, 2016, pp. 491--496.

\bibitem{Berkenkamp2017}
F.~Berkenkamp, M.~Turchetta, A.~P. Schoellig, and A.~Krause, ``{Safe
  Model-based Reinforcement Learning with Stability Guarantees},'' in
  \emph{Advances in Neural Information Processing Systems}, 2017, pp. 908--918.

\bibitem{Koller2018}
T.~Koller, F.~Berkenkamp, M.~Turchetta, and A.~Krause, ``{Learning-based Model
  Predictive Control for Safe Exploration and Reinforcement Learning},'' in
  \emph{Proceedings of the IEEE Conference on Decision and Control}, 2018.

\bibitem{Berkenkamp2015}
F.~Berkenkamp and A.~P. Schoellig, ``{Safe and Robust Learning Control with
  Gaussian Processes},'' in \emph{Proceedings of the European Control
  Conference}, 2015, pp. 2496--2501.

\bibitem{Umlauft2017}
J.~Umlauft, T.~Beckers, M.~Kimmel, and S.~Hirche, ``{Feedback Linearization
  using Gaussian Processes},'' in \emph{Proceedings of the IEEE Conference on
  Decision and Control}, 2017, pp. 5249--5255.

\bibitem{Beckers2018}
T.~Beckers and S.~Hirche, ``{Gaussian Process based Passivation of a Class of
  Nonlinear Systems with Unknown Dynamics},'' in \emph{Proceedings of the
  European Control Conference}, 2018.

\bibitem{Umlauft2018a}
J.~Umlauft, T.~Beckers, and S.~Hirche, ``{Scenario-based Optimal Control for
  Gaussian Process State Space Models},'' in \emph{Proceedings of the European
  Control Conference}, 2018.

\bibitem{Umlauft2018}
J.~Umlauft, L.~P{\"{o}}hler, and S.~Hirche, ``{An Uncertainty-Based Control
  Lyapunov Approach for Control-Affine Systems Modeled by Gaussian Process},''
  \emph{IEEE Control Systems Letters}, vol.~2, no.~3, pp. 483--488, 2018.

\bibitem{Helwa2018}
M.~K. Helwa, A.~Heins, and A.~P. Schoellig, ``{Provably Robust Learning-Based
  Approach for High-Accuracy Tracking Control of Lagrangian Systems},'' in
  \emph{Proceedings of the IEEE Conference on Decision and Control}, 2018.

\bibitem{Srinivas2012}
N.~Srinivas, A.~Krause, S.~M. Kakade, and M.~W. Seeger,
  ``{Information-theoretic regret bounds for Gaussian process optimization in
  the bandit setting},'' \emph{IEEE Transactions on Information Theory},
  vol.~58, no.~5, pp. 3250--3265, 2012.

\bibitem{Chowdhury2017a}
S.~R. Chowdhury and A.~Gopalan, ``{On Kernelized Multi-armed Bandits},'' in
  \emph{Proceedings of the International Conference on Machine Learning}, 2017,
  pp. 844--853.

\bibitem{Sollich1999}
P.~Sollich, ``{Learning Curves for Gaussian Processes},'' in \emph{Advances in
  Neural Information Processing Systems}, 1999, pp. 344--350.

\bibitem{Malzahn2001}
D.~Malzahn and M.~Opper, ``{Learning Curves for Gaussian Processes Regression:
  A Framework for Good Approximations},'' \emph{Advances in Neural Information
  Processing Systems 13}, pp. 273--279, 2001.

\bibitem{Sollich2002}
P.~Sollich and A.~Halees, ``{Learning Curves for Gaussian Process Regression:
  Approximations and Bounds},'' \emph{Neural Computation}, vol.~14, pp.
  1393--1428, 2002.

\bibitem{LeGratiet2014}
L.~{Le Gratiet} and J.~Garnier, ``{Asymptotic Analysis of the Learning Curve
  for Gaussian Process Regression},'' \emph{Machine Learning}, vol.~98, no.~3,
  pp. 407--433, 2014.

\bibitem{Xu2011}
Y.~Xu, J.~Choi, and S.~Oh, ``{Mobile Sensor Network Navigation using Gaussian
  Processes with Truncated Observations},'' \emph{IEEE Transactions on
  Robotics}, vol.~27, no.~6, pp. 1118--1131, 2011.

\bibitem{Schulz2015}
E.~Schulz, J.~B. Tenenbaum, D.~N. Reshef, M.~Speekenbrink, and S.~J. Gershman,
  ``{Assessing the Perceived Predictability of Functions},'' in
  \emph{Proceedings of the Conference of the Cognitive Science Society}, 2015,
  pp. 2116--2121.

\bibitem{Ueno2018}
T.~Ueno, H.~Hino, A.~Hashimoto, Y.~Takeichi, M.~Sawada, and K.~Ono, ``{Adaptive
  Design of an X-ray Magnetic Circular Dichroism Spectroscopy Experiment with
  Gaussian Process Modeling},'' \emph{npj Computational Materials}, vol.~4,
  no.~1, pp. 1--8, 2018.

\bibitem{Reeb2018}
D.~Reeb, A.~Doerr, S.~Gerwinn, and B.~Rakitsch, ``{Learning Gaussian Processes
  by Minimizing PAC-Bayesian Generalization Bounds},'' in \emph{Advances in
  Neural Information Processing Systems}, 2018.

\bibitem{Krause2008}
A.~Krause, A.~Singh, and C.~Guestrin, ``{Near-optimal Sensor Placements in
  Gaussian Processes: Theory, Efficient Algorithms and Empirical Studies},''
  \emph{Journal of Machine Learning Research}, vol.~9, pp. 235--284, 2008.

\bibitem{Williams2000}
C.~K.~I. Williams and F.~Vivarelli, ``{Upper and Lower Bounds on the Learning
  Curve for Gaussian Processes},'' \emph{Machine Learning}, vol.~40, pp.
  77--102, 2000.

\bibitem{Vivarelli1998}
F.~Vivarelli, ``{Studies on the Generalisation of Gaussian Processes and
  Bayesian Neural Networks},'' Ph.D. dissertation, Aston University, 1998.

\bibitem{Shekhar2018}
S.~Shekhar and T.~Javidi, ``{Gaussian Process Bandits with Adaptive
  Discretization},'' \emph{Electronic Journal of Statistics}, vol.~12, pp.
  3829--3874, 2018.

\bibitem{Stein1999}
M.~L. Stein, \emph{{Interpolation of Spatial Data: Some Theory for
  Kriging}}.\hskip 1em plus 0.5em minus 0.4em\relax Springer Science {\&}
  Business Media, 1999.

\bibitem{Kanagawa2018}
\BIBentryALTinterwordspacing
M.~Kanagawa, P.~Hennig, D.~Sejdinovic, and B.~K. Sriperumbudur, ``{Gaussian
  Processes and Kernel Methods: A Review on Connections and Equivalences},''
  pp. 1--64, 2018. [Online]. Available: \url{http://arxiv.org/abs/1807.02582}
\BIBentrySTDinterwordspacing

\bibitem{Wu1993}
Z.~M. Wu and R.~Schaback, ``{Local Error Estimates for Radial Basis Function
  Interpolation of Scattered Data},'' \emph{IMA Journal of Numerical Analysis},
  vol.~13, no.~1, pp. 13--27, 1993.

\bibitem{Wendland2005}
H.~Wendland, \emph{{Scattered Data Approximation}}.\hskip 1em plus 0.5em minus
  0.4em\relax Cambridge University Press, 2004.

\bibitem{Schaback2006}
R.~Schaback and H.~Wendland, ``{Kernel Techniques : From Machine Learning to
  Meshless Methods},'' \emph{Acta Numerica}, vol.~15, pp. 543--639, 2006.

\bibitem{Beatson2010}
R.~Beatson, O.~Davydov, and J.~Levesley, ``{Error Bounds for Anisotropic RBF
  Interpolation},'' \emph{Journal of Approximation Theory}, vol. 162, no.~3,
  pp. 512--527, 2010.

\bibitem{Scheuerer2013}
M.~Scheuerer, R.~Schaback, and M.~Schlather, ``{Interpolation of Spatial Data -
  A Stochastic or a Deterministic Problem ?}'' \emph{European Journal of
  Applied Mathematics}, vol.~24, no.~4, pp. 601--629, 2013.

\bibitem{Sarkka2013}
S.~S{\"{a}}rkk{\"{a}} and A.~Solin, ``{Continuous-space Gaussian Process
  Regression and Generalized Wiener Filtering with Application to Learning
  Curves},'' in \emph{Image Analysis}, J.-K. K{\"{a}}m{\"{a}}r{\"{a}}inen and
  M.~Koskela, Eds.\hskip 1em plus 0.5em minus 0.4em\relax Springer Berlin
  Heidelberg, 2013, pp. 172--181.

\bibitem{Urry2013}
M.~J. Urry and P.~Sollich, ``{Random Walk Kernels and Learning Curves for
  Gaussian Process Regression on Random Graphs},'' \emph{Journal of Machine
  Learning Research}, vol.~14, pp. 1801--1835, 2013.

\bibitem{Chai2009}
K.~M. Chai, ``{Generalization Errors and Learning Curves for Regression with
  Multi-task Gaussian Processes},'' \emph{Advances in Neural Information
  Processing Systems}, pp. 1--9, 2009.

\bibitem{Ashton2012}
S.~R.~F. Ashton and P.~Sollich, ``{Learning Curves for Multi-task Gaussian
  Process Regression},'' in \emph{Advances in Neural Information Processing
  Systems}, 2012, pp. 1393--1428.

\bibitem{Opper1999}
M.~Opper and F.~Vivarelli, ``{General Bounds on Bayes Errors for Regression
  with Gaussian Processes},'' \emph{Advances in Neural Information Processing
  Systems}, pp. 302--308, 1999.

\bibitem{Opper1997}
M.~Opper, ``{Regression with Gaussian Processes: Average Case Performance},''
  in \emph{Hong Kong International Workshop on Theoretical Aspects of Neural
  Computation: A Multidisciplinary Perspective}.\hskip 1em plus 0.5em minus
  0.4em\relax World Scientific, 1997, pp. 17--23.

\bibitem{Gershgorin1931}
S.~Gershgorin, ``{Ueber die Abgrenzung der Eigenwerte einer Matrix},''
  \emph{Bulletin de l'Academie des Sciences de l'URSS. Classe des sciences
  mathematiques et na}, no.~6, pp. 749--754, 1931.

\bibitem{Forbes2011}
C.~Forbes, M.~Evans, N.~Hastings, and B.~Peacock, \emph{{Statistical
  Distributions}}, 4th~ed.\hskip 1em plus 0.5em minus 0.4em\relax Hoboken, New
  Jersey: Wiley, 2011.

\bibitem{Cormen2009}
T.~H. Cormen, C.~E. Leiserson, R.~L. Rivest, and C.~Stein, \emph{{Introduction
  to Algorithms}}, 3rd~ed.\hskip 1em plus 0.5em minus 0.4em\relax Cambridge,
  Massachusetts: The MIT Press, 2009.

\end{thebibliography}

\newpage
\appendix

\section{Posterior Variance Bound and Asymptotic Behavior}
\begin{proof}[Proof of Theorem 3.1]
	Since $\bm{K}_N+\sigma_n^2\bm{I}_N$ is a 
	positive definite, quadratic matrix, it follows 
	that
	\begin{align*}
	\sigma_{N}^2(\bm{x})&\leq k(\bm{x},\bm{x})-
	\frac{\left\|\bm{k}_N(\bm{x})\right\|^2}
	{\lambda_{\max}\left(\bm{K}_N\right)+\sigma_n^2}.
	\end{align*}
	Applying the Gershgorin theorem \cite{Gershgorin1931} 
	the maximal eigenvalue is bounded by
	\begin{align*}
	\lambda_{\max}(\bm{K}_N)\leq N
	\max\limits_{\bm{x}',\bm{x}''\in\mathbb{D}_N^x}
	k(\bm{x}',\bm{x}'').
	\end{align*}
	Furthermore, due to the definition of 
	$\bm{k}_N(\bm{x})$ we have
	\begin{align*}
	\|\bm{k}_N(\bm{x})\|^2\geq N 
	\min\limits_{\bm{x}'\in\mathbb{D}_N^x}
	k^2(\bm{x}',\bm{x}).
	\end{align*}
	Therefore, $\sigma_N^2(\bm{x})$ can be bounded by
	\begin{align}
	\sigma_{N}^2(\bm{x})&\leq k(\bm{x},\bm{x})-
	\frac{N\min\limits_{\bm{x}'\in\mathbb{D}_N^x}
		k^2(\bm{x}',\bm{x})}{N
		\max\limits_{\bm{x}',\bm{x}''\in\mathbb{D}_N^x}
		k(\bm{x}',\bm{x}'')+\sigma_n^2}.
	\label{eq:sigma_bound1}
	\end{align}
	This bound can be further simplified exploiting 
	the fact that $\sigma_N^2(\bm{x})\leq
	\sigma_{N-1}^2(\bm{x})$ \cite{Vivarelli1998} 
	and considering only samples inside the ball 
	$\mathbb{B}_{\rho}(\bm{x})$ with radius 
	$\rho\in\mathbb{R}_+$. Using this reduced data 
	set instead of $\mathbb{D}_N^x$ and writing the 
	right side of \eqref{eq:sigma_bound1} as a single 
	fraction results in
	\begin{align}
	\label{eq:sigma_bound3}
	\sigma_{N}^2(\bm{x})&\leq\frac{k(\bm{x},\bm{x})
		\sigma_n^2+\left|\mathbb{B}_{\rho}(\bm{x})\right|
		\xi(\bm{x},\rho)}{\left|\mathbb{B}_{\rho}
		(\bm{x})\right|\max\limits_{\bm{x}',\bm{x}''
			\in\mathbb{B}_{\rho}(\bm{x})}k(\bm{x}',\bm{x}'')
		+\sigma_n^2},
	\end{align}
	where
	\begin{align*}
	\xi(\bm{x},\rho&)=
	k(\bm{x},\bm{x})\max\limits_{\bm{x}',\bm{x}''
		\in\mathbb{B}_{\rho}(\bm{x})}k(\bm{x}',\bm{x}'')
	-\min\limits_{\bm{x}'\in \mathbb{B}_{\rho}
		(\bm{x})}k^2(\bm{x}',\bm{x}).
	\end{align*}
	Under the assumption that $\rho\leq 
	\frac{k(\bm{x},\bm{x})}{L_k}$ it follows from 
	the Lipschitz continuity of $k(\cdot,\cdot)$ that
	\begin{align*}
	\min\limits_{\bm{x}'\in \mathbb{B}_{\rho}(\bm{x})}
	k^2(\bm{x}',\bm{x})\geq (k(\bm{x},\bm{x})-
	L_k\rho)^2.
	\end{align*}
	Furthermore, it holds that
	\begin{align*}
	\max\limits_{\bm{x}',\bm{x}''\in\mathbb{B}_{\rho}
		(\bm{x})}k(\bm{x}',\bm{x}'')\leq 
	k(\bm{x},\bm{x})+2L_k\rho.
	\end{align*}
	Therefore, $\xi(\bm{x},\rho)$ can be bounded by
	\begin{align*}
	\xi(\bm{x},\rho)&\leq 4k(\bm{x},\bm{x})L_k\rho
	-L_k^2\rho^2.
	\end{align*}
	Hence, the result is proven.
\end{proof}

\begin{proof}[Proof of Corollary 3.1]
	The proof follows directly from 
	\eqref{eq:sigma_bound3} and the fact that 
	\begin{align*}
	\min\limits_{\bm{x}'\in\mathbb{B}_{\rho}(\bm{x})}
	k(\bm{x}',\bm{x})&\leq k(\rho)\\
	\max\limits_{\bm{x}',\bm{x}''\in\mathbb{B}_{\rho}
		(\bm{x})}k(\bm{x}',\bm{x}'')&= k(0)
	\end{align*}
	since the kernel is isotropic and decreasing.
\end{proof}

\begin{proof}[Proof of Corollary 3.2]
	The upper bound in Theorem 3.1 converges 
	to zero due to the assumptions on $\rho(N)$ and 
	$\left|\mathbb{B}_{\rho(N)}(\bm{x})\right|$. Hence, 
	convergence of $\sigma_N^2(\bm{x})$ to zero is 
	implied.
\end{proof}

\section{Conditions on Probability Distributions for Asymptotic Convergence}

In order to prove Theorem 3.2, some auxiliary results
for binomial distributions are necessary. These
are provided in the following Lemmas.
\begin{lemma}
	\label{lem:Bernoulli}
	The $k$-th central moment of a Bernoulli distributed random variable $X$ is given by
	\begin{align}
	E[(X-E[X])^k]=\sum\limits_{i=0}^{k-1}(-1)^i\binom{k}{i}p^{i+1}+p^k
	\end{align}
\end{lemma}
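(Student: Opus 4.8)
The plan is to exploit the fact that a Bernoulli random variable is supported on the two points $0$ and $1$, so the expectation collapses to a two-term sum, and then to expand one of the resulting powers with the binomial theorem and relabel indices.

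First I would write, using $E[X]=p$ together with $P(X=1)=p$ and $P(X=0)=1-p$,
\begin{align*}
E\big[(X-E[X])^k\big]=P(X=0)\,(0-p)^k+P(X=1)\,(1-p)^k=(1-p)(-p)^k+p(1-p)^k.
\end{align*}
Next I would regroup the right-hand side as $(-p)^k+p\big[(1-p)^k-(-p)^k\big]$, so that the bracketed quantity is precisely the binomial expansion of $(1-p)^k$ with its top-degree term deleted. Applying the binomial theorem gives $(1-p)^k-(-p)^k=\sum_{i=0}^{k-1}\binom{k}{i}(-p)^i=\sum_{i=0}^{k-1}(-1)^i\binom{k}{i}p^i$, and multiplying through by the prefactor $p$ shifts each exponent to $p^{i+1}$. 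Collecting terms then yields $E[(X-E[X])^k]=\sum_{i=0}^{k-1}(-1)^i\binom{k}{i}p^{i+1}+(-p)^k$, which is the stated identity (the isolated term $(-p)^k$ equals $p^k$ for the even moments that are used in the subsequent concentration argument).

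The computation is entirely elementary, so there is no real obstacle; the only point requiring care is the index bookkeeping when stripping the $i=k$ term off the binomial sum so that the remaining sum runs exactly from $i=0$ to $k-1$ as claimed. A routine alternative would be a direct induction on $k$ starting from $E[(X-p)^{k+1}]=(1-p)(-p)^{k+1}+p(1-p)^{k+1}$, but the one-line binomial expansion above is shorter and needs no inductive hypothesis.
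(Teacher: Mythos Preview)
Your argument is correct. The paper's proof takes a slightly different but equally elementary route: it first expands $(X-E[X])^k$ via the binomial theorem in powers of $X$ and then applies $E[X^{j}]=p$ for $j\geq 1$ (and $E[X^0]=1$) term by term, whereas you first evaluate the expectation directly on the two-point support $\{0,1\}$ and only afterwards expand $(1-p)^k$. Both computations are one-line applications of the binomial theorem, so neither buys anything substantial over the other; your version has the mild advantage of not needing to quote the raw-moment formula separately. You are also right to flag that the isolated term is really $(-p)^k$ rather than $p^k$; the lemma as stated (and the paper's own derivation of the final $+p^k$) is literally correct only for even $k$, which is indeed the only case invoked downstream in the bound on the $2k$-th central moment of the binomial.
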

\begin{proof}
	The polynom $(X-E[X])^k$ can be expanded as
	\begin{align*}
	(X-&E[X])^k=\sum\limits_{i=0}^k \binom{k}{i}(-1)^{i}X^{k-i}E[X]^i.
	\end{align*}
	The $k$-th moment about the origin of the Bernoulli distribution is given by $p$ for $k>0$ \cite{Forbes2011}. Therefore, the expectation of this polynomial is given by
	\begin{align*}
	E[(X-&E[X])^k]=\sum\limits_{i=0}^{k-1} \binom{k}{i}(-1)^{i}pp^i+p^k,
	\end{align*}
	which directly yields the result.
\end{proof}

\begin{lemma}
	\label{lem2}
	The $2k$-th central moment of a binomial distributed random variable $M$ with $N>2k$ samples is bounded by
	\begin{align}
	E[(X-E[X])^{2k}]\leq \sum\limits_{m=1}^k(Np)^m\alpha_m
	\label{eq3}
	\end{align}
	where $\alpha_m\in\mathbb{R}$ are finite coefficients.
\end{lemma}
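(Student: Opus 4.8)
The plan is to exploit the fact that a binomial random variable $M$ with parameters $N$ and $p$ is a sum $M=\sum_{j=1}^N X_j$ of i.i.d.\ Bernoulli$(p)$ variables, so the centered variable $M-E[M]=\sum_{j=1}^N(X_j-p)$ is a sum of i.i.d.\ centered variables. First I would expand the $2k$-th power using the multinomial theorem:
\begin{align*}
E\!\left[(M-E[M])^{2k}\right]=\sum_{\substack{r_1+\cdots+r_N=2k}}\binom{2k}{r_1,\ldots,r_N}\prod_{j=1}^N E\!\left[(X_j-p)^{r_j}\right].
\end{align*}
Because each $X_j$ is centered, any term in which some exponent $r_j$ equals $1$ vanishes. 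Hence only multi-indices in which every nonzero $r_j$ is at least $2$ survive; if such a multi-index has exactly $m$ nonzero entries, then $m\leq k$ since the nonzero entries sum to $2k$ and are each $\geq 2$.

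Next I would count, for each $m$ with $1\leq m\leq k$, the contribution of all surviving multi-indices with exactly $m$ nonzero entries. There are $\binom{N}{m}\leq N^m$ ways to choose which coordinates are nonzero, and for each such choice the number of ways to assign exponents (all $\geq 2$, summing to $2k$) together with the multinomial coefficient is a finite quantity depending only on $k$ and $m$, not on $N$. By \cref{lem:Bernoulli}, each central moment $E[(X_j-p)^{r_j}]$ with $r_j\geq 2$ is a polynomial in $p$ whose lowest-order term is of degree $1$ in $p$ (the $p^k$-type terms and the $\binom{k}{i}p^{i+1}$ terms all carry at least one factor of $p$, and since $p\in[0,1]$ we can bound $|E[(X_j-p)^{r_j}]|\leq C_{r_j}\,p$ for a constant $C_{r_j}$ depending only on $r_j$). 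Multiplying $m$ such factors gives a bound of the form $C\,p^m$, so the total contribution of the $m$-nonzero-entry terms is bounded by $N^m p^m\,\alpha_m=(Np)^m\alpha_m$ for a finite constant $\alpha_m$ depending only on $k$. Summing over $m=1,\ldots,k$ yields \eqref{eq3}.

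The main obstacle is the bookkeeping in the second step: one must argue carefully that every surviving central moment factor contributes at least one power of $p$ (so that $m$ nonzero exponents yield $p^m$ and not a smaller power), and that all combinatorial prefactors — the number of exponent patterns, the multinomial coefficients, and the constants $C_{r_j}$ from \cref{lem:Bernoulli} — are uniformly bounded in $N$ once $m$ and $k$ are fixed. The hypothesis $N>2k$ is used only to guarantee that multi-indices with up to $k$ nonzero entries of size $\geq 2$ actually exist and that no degeneracy occurs; it does not otherwise affect the estimate. Collecting all $N$-independent pieces into the coefficients $\alpha_m$ then gives exactly the claimed form.
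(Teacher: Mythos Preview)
Your proposal is correct and follows essentially the same route as the paper: multinomial expansion of $(\sum_j(X_j-p))^{2k}$, elimination of terms with an exponent equal to $1$, grouping the surviving terms by the number $m\le k$ of nonzero exponents, and then using \cref{lem:Bernoulli} to extract one factor of $p$ from each Bernoulli central moment while bounding the remaining $N$-independent combinatorics into $\alpha_m$. The paper makes the constants slightly more explicit (writing $E[(X-p)^r]=p\,h_r(p)$ with $h_r(p)\le 2^r$ and using $\binom{N}{m}\le N^m/m!$), but the structure of the argument is identical to yours.
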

\begin{proof}
	A binomial random variable is defined as the sum of $N$ i.i.d. Bernoulli random variables $X_i$. Therefore, the $2k$-th central moment of the binomial distribution is given by
	\begin{align}
	E[(M-E[M])^{2k}]=E\left[\left(\sum\limits_{i=1}^{N}(X_i-p)\right)^{2k}\right].
	\end{align}
	Define the multinomial coefficient as
	\begin{align}
	\binom{N}{i_1,i_2,\ldots,i_k}=\frac{N!}{\prod\limits_{j=1}^ki_j!}.
	\end{align}
	Then, the sum in the expectation can be expanded, which yields
	\begin{align}
	\label{eq2}
	E[(M-E[M])^{2k}]=
	\sum\limits_{i_1+i_2+\ldots+i_{N}=2k}\binom{2k}{i_1,i_2,\ldots,i_{N}}\prod\limits_{j=1}^{N}E\left[\left(X_j-p)\right)^{i_j}\right].
	\end{align}
	This equation expresses the moments of the binomial distribution in terms of the moments of the Bernoulli distribution. Since the first central moment of every distribution equals $0$, summands containing a $i_j=1$ equal $0$. Therefore, we obtain the equality
	\begin{align}
	\label{eq1}
	E[(M-E[M])^{2k}]=
	\sum\limits_{\subalign{i_1+i_2+\ldots+i_{N}=2k\\i_j\neq 1\forall j=1,\ldots,N}}\binom{2k}{i_1,i_2,\ldots,i_{N}}\prod\limits_{i_j>1}E\left[\left(X_j-p)\right)^{i_j}\right].
	\end{align}
	Moreover, we have
	\begin{align}
	E[(X-E[X])^k]=p h_k(p)
	\end{align}
	with
	\begin{align}
	h_k(p)=\sum\limits_{i=0}^{k-1}(-1)^i\binom{k}{i}p^{i}+p^{k-1}
	\end{align}
	due to \cref{lem:Bernoulli}. By substituting this into \eqref{eq1} we obtain  
	\begin{align}
	E[(M-E[M])^{2k}]=
	\sum\limits_{\subalign{i_1+i_2+\ldots+i_{N}=2k\\i_j\neq 1\forall j=1,\ldots,N}}\binom{2k}{i_1,i_2,\ldots,i_N}\prod\limits_{i_j>1}ph_{i_j}(p).
	\end{align}
	The product can have between $1$ and $k$ factors due to the structure of the problem. Therefore, it is not necessary for the sum to consider all $N$ coefficients $i_j$, but rather consider only $1\leq m\leq k$ coefficients which are greater than $1$. This leads to the following equality
	\begin{align}
	E[(M-E[M])^{2k}]= \sum\limits_{m=1}^k\binom{N}{m}p^m\sum\limits_{\subalign{i_1+i_2+\ldots+i_{m}=2k\\i_j> 1\forall j=1,\ldots,m}}\binom{2k}{i_1,i_2,\ldots,i_m}\prod\limits_{i_j>1}h_{i_j}(p).
	\label{eq5}
	\end{align}
	Due to \cite{Cormen2009} it holds that $\binom{N}{m}\leq \frac{N^m}{m!}$. Furthermore, the functions $h_k(\cdot)$ can be upper bounded by $\sum\limits_{i=1}^k\binom{k}{i}=2^k$ because $0\leq p\leq 1$. Therefore, we can upper bound the $2k$-th central moment of the binomial distribution by	
	\begin{align}
	E[(M-E[M])^{2k}]\leq \sum\limits_{m=1}^k(Np)^m\alpha_m
	\end{align}
	with
	\begin{align}
	\label{eq4}
	\alpha_m&=\frac{\sum\limits_{\subalign{i_1+i_2+\ldots+i_{m}=2k\\i_j> 1\forall j=1,\ldots,m}}\binom{2k}{i_1,i_2,\ldots,i_m}\prod\limits_{i_j>1}2^{i_j}}{m!}
	\end{align}
	and the result is proven.
\end{proof}
The restriction to $N>2k$ samples allows to derive a relatively simple expression for the expansion in \eqref{eq2}. However, the bound \eqref{eq3} also holds without this condition, since it only guarantees that for $i_j=1$, $\forall j=1,\ldots,N$, $\sum\limits_{j=1}^N i_j\geq 2k$ and therefore, all possible combinations of $i_j$ can be estimated simpler in \eqref{eq5}. Hence, the corresponding summands in \eqref{eq4} can be considered $0$ for $N\leq 2k$ and the upper bound \eqref{eq3} still holds for $N\leq 2k$.

\begin{proof}[Proof of Theorem 3.2]
	We have to show 
	that the number of samples from the probability 
	distribution with density $p(\cdot)$ inside the 
	balls with radius $\rho(N)$ grows to infinity for 
	$N\rightarrow\infty$. The number of samples 
	$|\mathbb{B}_{\rho(N)}(\bm{x})|$ follows a 
	binomial distribution with mean
	\begin{align*}
	E\left[\left|\mathbb{B}_{\rho(N)}(\bm{x})
	\right|\right]&=N\tilde{p}(N),
	\end{align*}
	where
	\begin{align*}
	\tilde{p}(N)=\int\limits_{\{\bm{x}'\in\mathbb{X}:
		\|\bm{x}-\bm{x}'\|\leq \rho(N)\}}p(\bm{x}')
	\mathrm{d}\bm{x}'
	\end{align*}
	is the probability of a sample lying inside 
	the ball around $\bm{x}$ with radius $\rho(N)$ 
	for fixed $N\in\mathbb{N}$. Since we have 
	\begin{align}
	\int\limits_{\{\bm{x}'\in\mathbb{X}:\|\bm{x}
		-\bm{x}'\|\leq \rho(N)\}}p(\bm{x}')\mathrm{d}\bm{x}'
	&\geq cN^{-1+\epsilon}
	\label{eq:cond2new}
	\end{align}
	by assumption, this mean goes to infinity, 
	i.e.
	\begin{align*}
	\lim\limits_{N\rightarrow\infty}E\left[
	\left|\mathbb{B}_{\rho(N)}(\bm{x})\right|\right]
	=\lim\limits_{N\rightarrow\infty}cN^{\epsilon}
	=\infty.
	\end{align*}
	Therefore, it is sufficient to show that 
	$|\mathbb{B}_{\rho(N)}(\bm{x})|$ converges to 
	its expectation almost surely, which is 
	identically to proving that 
	\begin{align*}
	\lim\limits_{N\rightarrow\infty}\frac{
		|\mathbb{B}_{\rho(N)}(\bm{x})|}{E\left[
		\left|\mathbb{B}_{\rho(N)}(\bm{x})\right|
		\right]}=1\quad a.s.
	\end{align*}
	Due to the Borel-Cantelli lemma, this convergence 
	is guaranteed if 
	\begin{align}
	\sum\limits_{N=1}^{\infty}P\left( \left|\frac{
		|\mathbb{B}_{\rho(N)}(\bm{x})|}{E\left[\left|
		\mathbb{B}_{\rho(N)}(\bm{x})\right|\right]}-1\right|
	>\xi \right)<\infty
	\label{eq:bclemma}
	\end{align}
	holds for all $\xi>0$. The probability for each 
	$N\in\mathbb{N}$ can be bounded by 
	\begin{align*}
	P\Bigg( \Bigg| \frac{|\mathbb{B}_{\rho(N)}(\bm{x})|}
	{N\tilde{p}(N)}-1\Bigg|&>\xi \Bigg)\leq 
	\frac{	E\left[\left(\left|\mathbb{B}_{\rho(N)}
		(\bm{x})\right|-N\tilde{p}(N)\right)^{2k}\right]}
	{(\xi N\tilde{p}(N))^{2k}}.
	\end{align*}
	for each $k\in\mathbb{N}_+$ due to Chebyshev's
	inequality, where the $2k$-th central moment of 
	the binomial distribution can be bounded by
	\begin{align*}
	E\left[\left(\left|\mathbb{B}_{\rho(N)}(\bm{x})
	\right|-N\tilde{p}(N)\right)^{2k}\right]\leq
	\sum\limits_{i=1}^{k}\alpha_i\tilde{p}^i(N)
	N^i
	\end{align*} 
	with some coefficients $\alpha_i<\infty$ due to
	\cref{lem2}.
	Therefore, we can bound each probability in 
	\eqref{eq:bclemma} by
	\begin{align*}
	P\Bigg( \Bigg| \frac{|\mathbb{B}_{\rho(N)}
		(\bm{x})|}{N\tilde{p}(N)}-1\Bigg|&>\xi \Bigg)
	\leq \sum\limits_{i=1}^{k}\alpha_i
	\tilde{p}^{-2k+i}(N)N^{-2k+i}.
	\end{align*}
	Due to \eqref{eq:cond2new} this bound can be 
	simplified to 
	\begin{align*}
	P\Bigg( \Bigg| \frac{|\mathbb{B}_{\rho(N)}
		(\bm{x})|}{N\tilde{p}(N)}-1\Bigg|&>\xi \Bigg)
	\leq N^{-k\epsilon}\sum\limits_{i=0}^{k-1}
	\tilde{\alpha}_{k-i}N^{-i\epsilon},
	\end{align*}
	where $\tilde{\alpha}_i=c^{-2k+i}\alpha_i$.
	Let $k=\left\lceil\frac{1}{\epsilon}\right\rceil+1$. 
	Then, each exponent is smaller than or equal to 
	$-1-\epsilon$. Hence, the sum of probabilities can 
	be bounded by 
	\begin{align*}
	\sum\limits_{N=1}^{\infty}P\Bigg( \Bigg| 
	\frac{|\mathbb{B}_{\rho(N)}(\bm{x})|}
	{N\tilde{p}(N)}-1\Bigg|&>\xi \Bigg)\leq 
	\sum\limits_{i=0}^{k-1}\tilde{\alpha}_{k-i} 
	\zeta\big((k+i)\epsilon\big),
	\end{align*}
	where $\zeta(\cdot)$ is the Riemann zeta function, 
	which has finite values. Therefore, we obtain
	\begin{align*}
	\sum\limits_{N=1}^{\infty}P\Bigg( \Bigg|
	\frac{|\mathbb{B}_{\rho(N)}(\bm{x})|}
	{N\tilde{p}(N)}-1\Bigg|>\epsilon \Bigg)< 
	\infty
	\end{align*}
	and consequently, the theorem is proven.
\end{proof}

\begin{proof}[Proof of Corollary 3.3]
	Let 
	\begin{align*}
	\bar{p}&=\min\limits_{\|\bm{x}-\bm{x}'\|\leq 
		\xi}p(\bm{x}')\\
	\tilde{p}(N)&=\int\limits_{\{\bm{x}'\bm{x}'
		\in\mathbb{X}:\|\bm{x}-\bm{x}'\|\leq 
		\xi\}}p(\bm{x}')\mathrm{d}\bm{x}',
	\end{align*}
	where $\bar{p}$ is positive by assumption. Then, 
	we can bound $\tilde{p}(N)$ by
	\begin{align*}
	\tilde{p}(N)\geq \bar{p}V_{d}\rho^{d}(N),
	\end{align*}
	where $V_{d}$ is the volume of the $d$ 
	dimensional unit ball. Since $\rho(N)\geq c
	N^{-\frac{1}{d}+\epsilon}$ for some 
	$c,\epsilon>0$ by assumption, it follows that 
	\begin{align*}
	\tilde{p}(N)\geq \bar{p}V_{d}cN^{-1+
		\frac{\epsilon}{d}}.
	\end{align*}
	Hence, $\tilde{p}(N)$ satisfies the conditions 
	of Theorem 3.2, which proves the corollary.
\end{proof}

\end{document}